\def\eqref#1{equation~\ref{#1}}
\def\1{\bm{1}}
\DeclareMathAlphabet{\mathsfit}{\encodingdefault}{\sfdefault}{m}{sl}
\SetMathAlphabet{\mathsfit}{bold}{\encodingdefault}{\sfdefault}{bx}{n}
\theoremstyle{plain}
\newtheorem{theorem}{Theorem}[section]
\newtheorem{proposition}[theorem]{Proposition}
\newtheorem{observation}{Observation}
\theoremstyle{definition}
\newtheorem{definition}[theorem]{Definition}
\theoremstyle{remark}
\icmltitlerunning{Enhancing Adversarial Robustness in SNNs with Sparse Gradients}
\begin{document}

\twocolumn[
\icmltitle{Enhancing Adversarial Robustness in SNNs with Sparse Gradients}

\icmlsetsymbol{equal}{*}

\begin{icmlauthorlist}
\icmlauthor{Yujia Liu}{nersvt,lab,cs}
\icmlauthor{Tong Bu}{nersvt,lab,ai}
\icmlauthor{Jianhao Ding}{nersvt,lab,cs}
\icmlauthor{Zecheng Hao}{nersvt,lab,cs}
\icmlauthor{Tiejun Huang}{nersvt,lab,cs}
\icmlauthor{Zhaofei Yu}{nersvt,lab,ai}
\end{icmlauthorlist}

\icmlaffiliation{nersvt}{NERCVT, School of Computer Science, Peking University, China}
\icmlaffiliation{lab}{National Key Laboratory for Multimedia Information Processing, Peking University, China}
\icmlaffiliation{cs}{School of Computer Science, Peking University, China}
\icmlaffiliation{ai}{Institution for Artificial Intelligence, Peking University, China}

\icmlcorrespondingauthor{Zhaofei Yu}{yuzf12@pku.edu.cn}

\icmlkeywords{Machine Learning, ICML}

\vskip 0.3in
]

\printAffiliationsAndNotice{}  %

\begin{abstract}
Spiking Neural Networks (SNNs) have attracted great attention for their energy-efficient operations and biologically inspired structures, offering potential advantages over Artificial Neural Networks (ANNs) in terms of energy efficiency and interpretability. Nonetheless, similar to ANNs, the robustness of SNNs remains a challenge, especially when facing adversarial attacks. Existing techniques, whether adapted from ANNs or specifically designed for SNNs, exhibit limitations in training SNNs or defending against strong attacks.
In this paper, we propose a novel approach to enhance the robustness of SNNs through gradient sparsity regularization. We observe that SNNs exhibit greater resilience to random perturbations compared to adversarial perturbations, even at larger scales. Motivated by this, we aim to narrow the gap between SNNs under adversarial and random perturbations, thereby improving their overall robustness. 
To achieve this, we theoretically prove that this performance gap is upper bounded by the gradient sparsity of the probability associated with the true label
concerning the input image, laying the groundwork for a practical strategy to train robust SNNs by regularizing the gradient sparsity. 
We validate the effectiveness of our approach through extensive experiments on both image-based and event-based datasets.
 The results demonstrate notable improvements in the robustness of SNNs.
Our work highlights the importance of gradient sparsity in SNNs and its role in enhancing robustness.
\end{abstract}

\section{Introduction}

Although Artificial Neural Networks (ANNs) have achieved impressive performance across various tasks~\citep{2016_CVPR_He_ResNet,2021_NeuroComput_Mishra_Diagnosis,2021_CompSur_Khan_AutoDrive}, they are often plagued by complex computations and limited interpretability~\citep{2021_Liu_SwinTrans,2018_Zachary_Inter}.
In recent years, Spiking Neural Networks (SNNs) have garnered significant attention in the field of artificial intelligence due to their energy-efficient operations and biologically-inspired architectures~\citep{maas1997networks,zenke2021visualizing}. In SNNs, neurons simulate changes in membrane potentials and transmit information through spike trains~\citep{roy2019towards}. These characteristics allow for a certain level of biological interpretation while avoiding the extensive and complex matrix multiplication operations inherent in ANNs.
Remarkably, SNNs have achieved competitive performance with ANNs on various classification datasets~\citep{sengupta2019going,fang2021deep,deng2021temporal,Xu_2023_NIPS}.

Similar to ANNs~\citep{2016_arxiv_Tanay_LinearCase,2019_CVPR_Stutz_OffManifold,2022_CS_Liu_LPIPISAttack}, the issue of robustness poses a significant challenge for SNNs~\citep{sharmin2019comprehensive,sharmin2020inherent,kundu2021hire}. When subjected to imperceptible perturbations added to input images, SNNs can exhibit misclassifications, which are known as adversarial attacks. Developing techniques to train adversarially robust SNNs remains an ongoing problem in the research community.
Some successful techniques designed for ANNs have been adapted for use with SNNs, including adversarial training~\citep{aleksander2018towards,2022_AI_Ho_AdvTrain,ding2022snn} and certified training~\citep{2020_ICLR_Zhang_IBP,2021_ICML_Zhang_LinfNet,liang2022toward}. 
Additionally, there are SNN-specific methods proposed to improve robustness, such as temporal penalty configurations~\citep{leontev2021robustness}, and specialized coding schemes~\citep{sharmin2020inherent}.
However, these methods either prove challenging to train on SNNs~\citep{liang2022toward} or exhibit limited effectiveness against strong attacks~\citep{sharmin2020inherent}.

In this paper, we present a novel approach to enhance the robustness of SNNs by considering the gradient sparsity with respect to the input image.
We find that SNNs exhibit greater robustness to random perturbations, even at larger scales, compared to adversarial perturbations.
Building upon this observation, we propose to minimize the performance gap between an SNN subjected to adversarial perturbations and random perturbations, thereby enhancing its overall robustness. The main contributions of our work are as follows and the code of this work is accessible at \url{https://github.com/putshua/gradient_reg_defense}.
\begin{itemize}
\setlength{\itemsep}{0pt}
\setlength{\parsep}{0pt} 
\setlength{\parskip}{0pt}
    \item We analyze the robustness of SNNs and reveal that SNNs exhibit robustness against random perturbations even at significant scales, but display vulnerability to small-scale adversarial perturbations.
    \item We provide theoretical proof that the gap between the robustness of SNNs under these two types of perturbations is upper bounded by the sparsity of gradients of the probability associated with the true label with respect to the input image.
    \item We propose to incorporate gradient sparsity regularization into the loss function during training to narrow the gap, thereby boosting the robustness of SNNs.
    \item Extensive experiments on the image-based and event-based datasets validate the effectiveness of our method, which significantly improves the robustness of SNNs.
\end{itemize}

\section{Related Work}
\subsection{Learning Algorithms of SNNs}
The primary objective of most SNN learning algorithms is to achieve high-performance SNNs with low latency. Currently, the most effective and popular learning algorithms for SNNs are the ANN-SNN conversion~\citep{cao2015spiking} and supervised learning~\citep{wu2018STBP}.
The ANN-SNN conversion method aims to obtain SNN weights from pre-trained ANNs with the same network structure. By utilizing weight scaling~\citep{li2021free, hu2023spiking}, threshold balancing~\citep{diehl2015fast, deng2020optimal}, quantization training techniques~\citep{bu2021optimal}, and spike calibration~\citep{hao2023bridging}, well-designed ANN-SNN algorithms can achieve lossless performance compared to the original ANN~\citep{Han_2020_CVPR, ho2021tcl}. However, the converted SNNs often require larger timesteps to achieve high performance, resulting in increased energy consumption. Moreover, they lose temporal information and struggle to process neuromorphic datasets.
The supervised learning approach directly employs the backpropagation algorithm to train SNNs with fixed timesteps. Wu et al.,~\yrcite{wu2018STBP, wu2019direct} borrowed the idea from the Back Propagation Through Time (BPTT) in RNN learning and proposed the Spatio-Temporal-Back-Propagation (STBP) algorithm. They approximate the gradient of spiking neurons using surrogate functions~\citep{neftci2019surrogate}. While supervised training significantly improves the performance of SNNs on classification tasks~\citep{ kim2020revisiting, lee2020enabling, fang2021incorporating,  zheng2021going, guo2022loss, yao2022glif, duantemporal, mostafa2017supervised, bohte2000spikeprop, zhang2020temporal, zhang2022rectified, Xu_2023_TNNLS, zhutraining, zhu2024exploring}, SNNs still fall behind ANNs in terms of generalization and flexibility. Challenges such as gradient explosion and vanishing persist in SNNs.

\subsection{Defense Methods of SNNs}
Methods for improving the robustness of SNNs can be broadly categorized into two classes. The first class draws inspiration from ANNs. A typical representative is adversarial training, which augments the training set with adversarial examples generated by attacks~\citep{aleksander2018towards,tramer2018ensemble,wong2019fast}. This approach has been shown to effectively defend against attacks that are used in the training phase. Another method is certified training, which utilizes certified defense methods to train a network~\citep{2018_ICML_Wong_DefenseConvex,2020_NIPS_Xu_Certified}. Certified training has demonstrated promising improvements in the robustness of ANNs~\citep{2020_ICLR_Zhang_IBP}, but its application to SNNs remains challenging~\citep{liang2022toward}. 
The second category consists of SNN-specific techniques designed to enhance robustness. On one hand, the choice of encoding the continuous intensity of an image into 0-1 spikes can impact the robustness of SNNs. Recent studies have highlighted the Poisson encoder as a more robust option~\citep{sharmin2020inherent,kim2022rate}. 
However, the Poisson encoder generally yields worse accuracy on clean images than the direct encoding, and the robustness improvement caused by the Poisson encoder varies with the number of timesteps used.
On the other hand, researchers have recognized the unique temporal dimension of SNNs and developed strategies related to temporal aspects to improve robustness~\cite{nomura2022robustness}. Hao et al.~\yrcite{haothreaten} further pointed out the significance of utilizing the rate and temporal information comprehensively to enhance the reliability of SNNs.
Apart from the studies on static datasets, there are works that attempt to perform adversarial attacks and defenses on the Dynamic Vision Sensors (DVS) dataset~\citep{marchisio2021r}.
In this paper, we mainly focus on the direct encoding of input images. We propose a gradient sparsity regularization strategy to improve SNNs' robustness with theoretical guarantees. Moreover, this strategy can be combined with adversarial training to further boost the robustness of SNNs.

\section{Preliminary}
\subsection{Neuron Dynamics in SNNs}
Similar to previous works~\citep{wu2018STBP,rathi2021diet}, we consider the commonly used Leaky Integrate-and-Fire (LIF) neuron model due to its efficiency and simplicity, the dynamic of which can be formulated as follows:
\begin{align}
\label{eq:LIF-model}
    u_i^l[t] &= \tau u_i^l[t-1](1-s_i^l[t-1]) + \sum_{j} w_{i,j}^{l-1} s_j^{l-1}[t], \\
\label{eq:Heaviside}
    s_i^l[t] & = H(u_i^l[t]-\theta).
\end{align}
 Equation~(\ref{eq:LIF-model}) describes the membrane potential of the $i$-th neuron in layer $l$, which receives the synaptic current from the $j$-th neuron in layer $l-1$. Here $\tau$ represents the membrane time constant and $t$ denotes the discrete time step ranging from $1$ to $T$.  The variable $u_i^l[t]$ represents the membrane potential of the $i$-th neuron in layer $l$ at the time step $t$. 
$w_{i,j}$ denotes the synaptic weight between the two neurons, and $s_j^{l-1}[t]$ represents the binary output spike of neuron $j$ in layer $l-1$. For simplicity, the resting potential is assumed to be zero so that the membrane potential will be reset to zero after firing.
Equation~(\ref{eq:Heaviside}) defines the neuron fire function. At each time step $t$, a spike will be emitted when the membrane potential $u_i^{l}[t]$ surpasses a specific threshold $\theta$. The function $H(\cdot)$ denotes the Heaviside step function, which equals 0 for negative input and 1 for others.

\subsection{Adversarial Attacks for SNNs}
Preliminary explorations have revealed that SNNs are also susceptible to adversarial attacks~\citep{sharmin2020inherent, kundu2021hire,liang2021exploring,marchisio2021dvs}. Well-established techniques such as the Fast Gradient Sign Method (FGSM) and Projected Gradient Descent (PGD) can generate strong adversaries that threaten SNNs. 

\textbf{FGSM}~\citep{goodfellow2014explaining} is a straightforward none-iterative attack method, expressed as follows:
\begin{align}
\label{eq:FGSM}
    \hat{\bm{x}} = \bm{x} + \epsilon~\text{sign} (\nabla_{\bm{x}} \mathcal{L} (f(\bm{x}),y)),
\end{align}
where $\bm{x}$ and $\hat{\bm{x}}$ represent the original image and the adversarial example respectively, $\epsilon$ denotes the perturbation bound, $\mathcal{L}$ refers to the loss function, $f(\cdot)$ represents the neural network function, and $y$ denotes the label data.

\textbf{PGD}~\citep{aleksander2018towards} is an iterative extension of FGSM, which can be described as follows:
\begin{equation}
\label{eq:PGD}
    \hat{\bm{x}}^k = \Pi_\epsilon \{ \bm{x}^{k-1} + \alpha~\text{sign} (\nabla_{\bm{x}} \mathcal{L} (f(\bm{x}^{k-1}),y)) \},
\end{equation}
where $k$ is the current iteration step and $\alpha$ is the step size. 
The operator $\Pi_\epsilon$ projects the adversarial examples onto the space of the $\epsilon$ neighborhood in the $\ell_\infty$ norm around $\bm{x}$.

\section{Methodology}
In this section, we first compare the vulnerability of SNNs to random perturbations versus adversarial perturbations. We highlight that SNNs exhibit significant robustness against random perturbations but are more susceptible to adversarial perturbations. Then we quantify the disparity between adversarial vulnerability and random vulnerability, proving that it is upper bounded by the gradient sparsity of the probability related to the true label concerning the input image. Based on this, we propose a novel approach to enhance the robustness of SNNs by introducing Sparsity Regularization (SR) of gradients in the training phase and incorporating this regularization into the learning rule of SNNs.

\begin{figure}[t]
    \centering
    \includegraphics[width=0.48\textwidth]{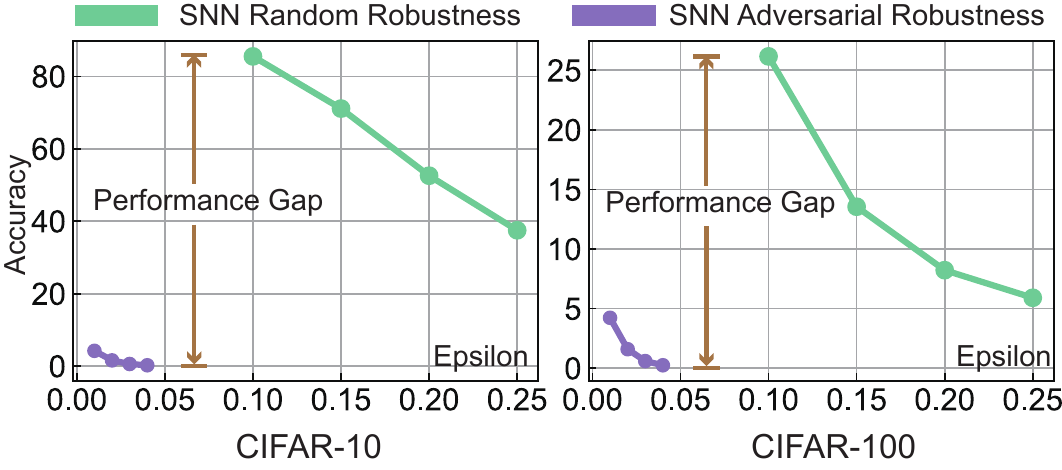}
    \caption{Comparison of the random vulnerability and adversarial vulnerability of SNNs on CIFAR-10 and CIFAR-100.}
    \label{fig:rand_vs_adv} 
\end{figure}

\subsection{Compare Random and Adversarial Vulnerability}
The objective of adversarial attacks is to deliberately alter the probability vector $f(\bm{x})$ in order to change the classification result.
Since the classification result is determined by the magnitude-based ordering of components in $f(\bm{x})$,
attackers aim to substantially decrease the value of the component corresponding to the true label of $\bm{x}$.
We suppose $\bm{x}$ with label $y$ belonging to the $y$-th class, so that the value of $f_{y}(\bm{x})$ has a substantial impact on the classification result.
Therefore, the stability of $f_{y}(\bm{x})$ becomes crucial for the robustness of SNNs, particularly in terms of the value of $|f_{y}(\hat{\bm{x}}) - f_y(\bm{x})|$ when subjected to small perturbation on $\bm{x}$.

To initiate our analysis, we define the random and adversarial vulnerability of an SNN, denoted as $f$, at a specific point $\bm{x}$ under an $\ell_p$ attack of size $\epsilon$.
\begin{definition}(Random Vulnerability)
	The random vulnerability of $f$ at point $\bm{x}$ to an $\ell_p$ attack of size $\epsilon$ is defined as the expected value of $( f_y(\bm{x}+\epsilon\cdot\delta) - f_y(\bm{x}) )^2$, where $\delta$ follows a uniform distribution within the unit $\ell_p$ ball, and $y$ represents the class of $\bm{x}$ belonging to. Mathematically, it can be expressed as:
	\begin{equation}
      \label{random}
		\rho_\text{rand}(f, \bm{x}, \epsilon, \ell_p) = 
		\mathop{\mathbb{E}}\limits_{\delta \sim U\{
                    \Vert \delta \Vert_p \leqslant  1
                    \}} \left( f_y(\bm{x}+\epsilon\cdot\delta) - f_y(\bm{x}) \right)^2.
	\end{equation}
\end{definition}

\begin{definition}(Adversarial Vulnerability)
	The adversarial vulnerability of $f$ at point $\bm{x}$ to an $\ell_p$ attack of size $\epsilon$ is defined as the supremum of $( f_y(\bm{x}+\epsilon\cdot\delta) - f_y(\bm{x}) )^2$, where $\delta$ follows a uniform distribution within the unit $\ell_p$ ball, and $y$ represents the class of $\bm{x}$ belonging to.
Mathematically, it can be expressed as:
	\begin{equation}
        \label{adversarial}
		\rho_\text{adv}(f, \bm{x},\epsilon, \ell_p) = 
		\sup_{\delta \sim U\{
                    \Vert \delta \Vert_p  \leqslant  1
                    \}} \left( f_y(\bm{x}+\epsilon\cdot\delta) - f_y(\bm{x}) \right)^2.
	\end{equation}
\end{definition}
\begin{figure*}[t]  
    \centering
    \includegraphics[width=\linewidth]{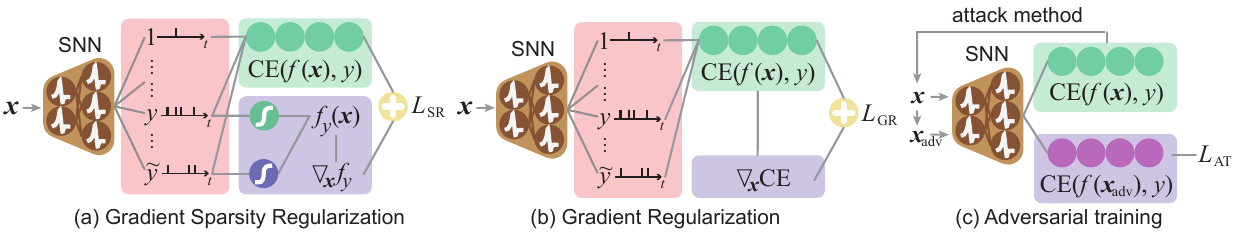} \\
    \caption{Illustration of (a) the proposed SR strategy, (b) gradient regularization and (c) adversarial training.}
    \label{fig:algo}
\end{figure*}
To gain a deeper understanding of the disparity in vulnerability between random perturbations and adversarial perturbations in SNNs, we conduct a small-scale experiment with a primary focus on $\ell_\infty$ attacks. The results of the experiment are depicted in Figure~\ref{fig:rand_vs_adv}.
We specifically prioritized $\ell_\infty$ attacks over other $\ell_p$ attacks due to the widespread utilization of $\ell_\infty$ constraints in various attack methods.
Adversarial examples generated under $\ell_\infty$ attacks tend to be more destructive compared to those generated under $\ell_0$ and $\ell_2$ attacks~\cite{aleksander2018towards}. 
Therefore, focusing on $\ell_\infty$ attacks allows us to assess the robustness of the SNN against the most severe random and adversarial perturbations.

In our experiment, we evaluate the performance of a well-trained SNN $f$ with the VGG-11 architecture. 
Our primary objective is to examine the impact of both random and adversarial perturbations on the SNN's classification results. For a given image $\bm{x}$, we added random perturbations uniformly drawn from a hyper-cube $\{ \delta_\text{rand}:\Vert \delta_\text{rand} \Vert_\infty \leqslant \epsilon\}$ to the original image, resulting in the perturbed image $\hat{\bm{x}}_\text{rand}$. Additionally, we employed a $\epsilon$-sized FGSM attack to generate an adversarial example $\hat{\bm{x}}_\text{adv}$ from $\bm{x}$.
Subsequently, we individually input both $\hat{\bm{x}}_\text{rand}$ and $\hat{\bm{x}}_\text{adv}$ into the network $f$ to observe any changes in the classification results.
We evaluate the classification accuracy of the perturbed images on the test set of CIFAR-10 and CIFAR-100 under both random and adversarial perturbations, respectively. The results, as depicted in Figure~\ref{fig:rand_vs_adv}, yield the following key findings:

\begin{observation}
\label{observ:rand_adv}
SNNs exhibit robustness against random perturbations even when the perturbation scale is significant, but display vulnerability to small-scale adversarial perturbations.
\end{observation}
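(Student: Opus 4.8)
The plan is to establish Observation~\ref{observ:rand_adv} empirically, by directly contrasting the accuracy degradation caused by random perturbations with that caused by adversarial perturbations over a sweep of budgets $\epsilon$. First I would fix a concrete, well-trained SNN --- a VGG-11 trained with a standard surrogate-gradient rule --- together with two benchmark datasets, CIFAR-10 and CIFAR-100, so that the comparison is made on a model with nontrivial clean accuracy. For each test image $\bm{x}$ with true label $y$ and each $\epsilon$ on a grid, I would (i) draw $\delta_\text{rand}$ uniformly from the $\ell_\infty$ cube of radius $\epsilon$ and form $\hat{\bm{x}}_\text{rand} = \bm{x} + \delta_\text{rand}$, and (ii) form $\hat{\bm{x}}_\text{adv}$ via an $\epsilon$-bounded FGSM step as in~(\ref{eq:FGSM}), then record whether $f$ still classifies each perturbed image correctly. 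Averaging over the test set produces two accuracy-versus-$\epsilon$ curves, and the claim reduces to exhibiting that the random curve stays essentially flat out to large $\epsilon$ while the adversarial curve collapses already at small $\epsilon$ (this is what Figure~\ref{fig:rand_vs_adv} is meant to show).

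The qualitative gap is exactly what one expects from a first-order picture, which is worth stating as informal justification. Writing $f_y(\bm{x}+\epsilon\delta) - f_y(\bm{x}) \approx \epsilon\,\langle \nabla_{\bm{x}} f_y(\bm{x}), \delta\rangle$, a random $\delta$ in a high-dimensional cube is nearly orthogonal to $\nabla_{\bm{x}} f_y$, so the induced change concentrates near zero; FGSM instead aligns $\delta$ with $\sign(\nabla_{\bm{x}} f_y)$, which maximizes this inner product and hence the change. In other words the expectation appearing in the definition of $\rho_\text{rand}$ (cf.~(\ref{random})) is small precisely when the supremum defining $\rho_\text{adv}$ is large, and the size of that gap is governed by how concentrated $\nabla_{\bm{x}} f_y$ is --- which is exactly the gradient-sparsity quantity the subsequent bound formalizes. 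So the observation both motivates and is consistent with the theory to follow.

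The main obstacle is ruling out gradient masking, i.e.\ ensuring that the adversarial curve collapses because the SNN is genuinely fragile rather than because FGSM --- combined with the non-differentiable Heaviside firing~(\ref{eq:Heaviside}) and its surrogate --- yields a misleadingly strong or weak attack. To guard against this I would verify that the surrogate-gradient estimate of $\nabla_{\bm{x}}\mathcal{L}$ is faithful, e.g.\ by checking that multi-step PGD as in~(\ref{eq:PGD}) does not behave pathologically relative to FGSM at the same budget, and by varying the number of timesteps $T$, since encoding and latency are known to affect SNN robustness. A secondary, minor issue is Monte-Carlo error in the random case: the expectation in the definition of $\rho_\text{rand}$ must be estimated by sampling, so I would average several random draws per image and confirm that the spread across draws is small compared with the gap between the two curves. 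I expect the attack-faithfulness check to be the real work; the remaining steps are routine bookkeeping.
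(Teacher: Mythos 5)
Your proposal matches the paper's own justification of this observation: it is established empirically by taking a well-trained VGG-11 SNN, perturbing CIFAR-10/CIFAR-100 test images either with uniform $\ell_\infty$ random noise of radius $\epsilon$ or with an $\epsilon$-sized FGSM attack, and comparing the resulting accuracy curves (Figure~\ref{fig:rand_vs_adv}). Your additional safeguards (first-order heuristic, gradient-masking checks via PGD and varying $T$, Monte-Carlo averaging) go beyond what the paper does for this observation but do not change the approach.
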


\subsection{Quantify the Gap between Random Vulnerability and Adversarial Vulnerability}

Observation~\ref{observ:rand_adv} indicates that SNNs exhibit notable robustness to random perturbations in comparison to adversarial perturbations. To enhance the adversarial robustness of SNNs, a natural approach is to minimize the disparity between adversarial vulnerability and random vulnerability.

To measure the disparity in vulnerability between the SNN $f$ under adversarial perturbations and random noise, we employ the ratio of $\rho_\text{adv}(f, \bm{x}, \epsilon, \ell_\infty)$ and $\rho_\text{rand}(f, \bm{x}, \epsilon, \ell_\infty)$.
We make the assumption that $\rho_\text{rand}(f, \bm{x}, \epsilon, \ell_\infty)\ne 0$, indicating that $f$ is not a constant function. This assumption aligns with the practical reality of SNNs in real-world applications. 
 Optimizing the ratio of $\rho_\text{adv}(f, \bm{x}, \epsilon, \ell_\infty)$ and $\rho_\text{rand}(f, \bm{x}, \epsilon, \ell_\infty)$ directly is a challenging task. However, we are fortunate to present a mathematical proof that establishes an upper bound for this ratio based on the sparsity of $\nabla_{\bm{x}} f_y$. Specifically, we have the following theorem.
\begin{theorem}
	\label{thr:binary_sparsity}
	Suppose $f$ is a differentiable SNN by surrogate gradients, and $\epsilon$ is the magnitude of an attack, assumed to be small enough. Given an input image $\bm{x}$ with corresponding label $y$, the ratio of $\rho_\text{adv}(f, \bm{x},\epsilon, \ell_\infty)$ and $ \rho_\text{rand}(f, \bm{x},\epsilon, \ell_\infty)$ is upper bounded by the sparsity of $\nabla_{\bm{x}} f_y$:
	\begin{equation}
		3 \leqslant 
                \frac{\rho_\text{adv}(f, \bm{x},\epsilon, \ell_\infty)}
                {\rho_\text{rand}(f, \bm{x},\epsilon, \ell_\infty)} 
                \leqslant 3\Vert \nabla_{\bm{x}} f_y(\bm{x}) \Vert_0.
	\end{equation}
\end{theorem}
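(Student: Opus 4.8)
The plan is to exploit the smallness of $\epsilon$ to linearize $f_y$ around $\bm{x}$, reducing both vulnerability quantities to norms of the gradient $\bm{g} := \nabla_{\bm{x}} f_y(\bm{x})$, and then to invoke two elementary inequalities relating $\Vert \bm{g} \Vert_1$, $\Vert \bm{g} \Vert_2$, and $\Vert \bm{g} \Vert_0$.

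First I would write the first-order Taylor expansion $f_y(\bm{x} + \epsilon\delta) - f_y(\bm{x}) = \epsilon \langle \bm{g}, \delta\rangle + o(\epsilon)$, valid uniformly for $\delta$ in the (bounded) unit $\ell_\infty$ ball since $f$ is differentiable via surrogate gradients. Squaring and discarding the higher-order terms, both $\rho_\text{adv}$ and $\rho_\text{rand}$ reduce, to leading order in $\epsilon^2$, to the supremum and the expectation of $\langle \bm{g}, \delta\rangle^2$ respectively.

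For the adversarial term, the supremum of $\langle \bm{g}, \delta\rangle^2$ over $\Vert \delta\Vert_\infty \leqslant 1$ is attained at $\delta = \mathrm{sign}(\bm{g})$ (precisely the FGSM direction), giving $\rho_\text{adv} \approx \epsilon^2 \Vert \bm{g}\Vert_1^2$ by $\ell_\infty$–$\ell_1$ duality. For the random term, the uniform law on $[-1,1]^n$ has i.i.d.\ coordinates with mean $0$ and second moment $1/3$, so the cross terms vanish and $\mathbb{E}\,\langle \bm{g}, \delta\rangle^2 = \tfrac{1}{3}\Vert \bm{g}\Vert_2^2$, giving $\rho_\text{rand} \approx \tfrac{\epsilon^2}{3}\Vert \bm{g}\Vert_2^2$. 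Taking the ratio, the $\epsilon^2$ cancels and one is left with $3\,\Vert \bm{g}\Vert_1^2 / \Vert \bm{g}\Vert_2^2$.

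Finally I would sandwich this quantity: the lower bound $3$ follows from $\Vert \bm{g}\Vert_2 \leqslant \Vert \bm{g}\Vert_1$ (valid for any vector), and the upper bound $3\Vert \bm{g}\Vert_0$ follows from Cauchy–Schwarz restricted to the support of $\bm{g}$, namely $\Vert \bm{g}\Vert_1 = \sum_{i : g_i \neq 0} |g_i| \leqslant \sqrt{\Vert \bm{g}\Vert_0}\,\Vert \bm{g}\Vert_2$. The main technical obstacle is making the linearization rigorous: one must argue that the $o(\epsilon)$ error is uniform over the unit ball and that the supremum and expectation commute with the limit $\epsilon \to 0$, which is exactly where the "assumed to be small enough" hypothesis and the assumption $\rho_\text{rand} \neq 0$ (keeping the normalized denominator bounded away from zero) come into play.
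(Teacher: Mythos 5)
Your proposal is correct and follows essentially the same route as the paper's proof: first-order Taylor expansion, computing $\rho_\text{rand}\approx\tfrac{\epsilon^2}{3}\Vert\nabla_{\bm{x}} f_y\Vert_2^2$ from the i.i.d.\ uniform coordinates and $\rho_\text{adv}\approx\epsilon^2\Vert\nabla_{\bm{x}} f_y\Vert_1^2$ via the sign direction, then sandwiching the ratio with $\Vert\cdot\Vert_2\leqslant\Vert\cdot\Vert_1\leqslant\sqrt{\Vert\cdot\Vert_0}\,\Vert\cdot\Vert_2$ (Cauchy--Schwarz on the support). Your added remark about uniformity of the $o(\epsilon)$ error is a point the paper leaves implicit, but it does not change the argument.
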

The proof is provided in Appendix~\ref{append:proof_theorem}.
This theorem illustrates that the disparity between the adversarial vulnerability and random vulnerability is upper bounded by the sparsity of $\nabla_{\bm{x}} f_y$.
It provides valuable insights into the correlation between gradient sparsity and the disparity in robustness exhibited by SNNs when subjected to different perturbations with $\ell_\infty$ attacks.
According to Theorem~\ref{thr:binary_sparsity}, we can infer that a sparser gradient contributes to closing the robustness gap between SNN $f$ under worst-case scenarios and its robustness under random perturbations.

From an intuitive perspective, minimizing the $\ell_0$ norm of $\nabla_{\bm{x}} f_y(\bm{x})$
serves to bring $\bm{x}$ closer to a local minimum point.
In an ideal scenario, this would entail trapping $\bm{x}$ within a local minimum, effectively rendering attackers unable to generate adversarial examples through gradient-based methods.
By introducing the sparsity constraint for each $\bm{x}$ in the training set, we encourage learning an SNN $f$ where input images tend to remain close to extreme points or trapped in local minimums.
This makes it challenging to perturb $f_y(\bm{x})$ with small perturbations, thereby enhancing the robustness of the SNN $f$.

\subsection{Loss Function with Sparsity Regularization}
To promote sparsity of the gradients, a straightforward approach is to incorporate the $\Vert \nabla_{\bm{x}} f_y(\bm{x}) \Vert_0$ term into the training loss, where $f_y(\bm{x})$ is the probability assigned by $f$ to $\bm{x}$ belonging to the true label $y$. 

Consider an SNN $f$ with a final layer denoted as $L$. The total number of neurons in layer $L$ is denoted by $N$, and the time-step $t$ ranges from $1$ to $T$.
For a given input image $\bm{x}$, the output vector of the $L^{th}$ layer depends on the collective outputs across all time-steps: 
\begin{equation}
    f^L(\bm{x}) = \left(
        \sum_{t=1}^T s_1^L(t),\dots,\sum_{t=1}^T s_N^L(t)
        \right)^T.
\end{equation}
While regularizing $\Vert \nabla_{\bm{x}} f^L_y(\bm{x}) \Vert_0$ is a straightforward way to keep the stability of $f^L_y(\bm{x})$, it may be insufficient for multi-classification tasks.
This is because the classification result is influenced not only by the value of $f_y^L$, but also by the magnitude of $f_y^L$ in comparison to the other components of $f_y^L$.
To account for such relationships while maintaining computational efficiency, we utilize two spike streams at the last layer of $f$ to calculate $f_y(\bm{x})$, as illustrated in Figure~\ref{fig:algo} (a). This is expressed as:
\begin{equation}
\begin{split}
    f_y & = \frac{
    e^{\sum_{t=1}^T s_y^L(t)}
    }{e^{\sum_{t=1}^T s_y^L(t)} + e^{\sum_{t=1}^T s_{\tilde{y}}^L(t)}}, \\
    f_{\tilde{y}} &= \frac{
    e^{\sum_{t=1}^T s_{\tilde{y}}^L(t)}
    }{e^{\sum_{t=1}^T s_y^L(t)} + e^{\sum_{t=1}^T s_{\tilde{y}}^L(t)}}.
\end{split}
\label{eq:f_y}
\end{equation}
where $\tilde{y}$ represents the index of the maximum component in $\{f_i^L(\bm{x}): i\ne y\}$. 

On one hand, the transformation in Equation~(\ref{eq:f_y}) introduces only one additional component in $f^L$ while preserving the classification results derived from $f^L$, as expressed by
\begin{equation}
    \mathop{\arg\max}_{i=1,\dots,N} f_i^L(\bm{x}) = \mathop{\arg\max}_{i=y,\tilde{y}} f_i (\bm{x}).
\end{equation}
On the other hand, since $f_y + f_{\tilde{y}} = 1$, so if $f_y$ is stable, $f_{\tilde{y}}$ is also stable. Thus, it is sufficient to regularize $\Vert \nabla_{\bm{x}} f_y (\bm{x}) \Vert_0$ to enhance the adversarial robustness of the SNN $f$.

Therefore, the training loss can be written as:
\begin{equation}
    \mathcal{L}(\bm{x},y) = \text{CE}(f^L(\bm{x}), y) + \lambda \Vert \nabla_{\bm{x}} f_y(\bm{x}) \Vert_0,
    \label{eq:08}
\end{equation}
where $\text{CE}(\cdot)$ is the cross-entropy loss, $f^L(\bm{x})$ is the output of the last layer,
and $\lambda$ denotes the coefficient parameter controlling the strength of the sparsity regularization.

Here we give the formulation of $\nabla_{\bm{x}} f_y(\bm{x})$ and the detailed derivation is provided in Appendix~\ref{sec:der_eq10}.
Given an input image $\bm{x}$ belonging to the class $y$, $\nabla_{\bm{x}} f_y(\bm{x})$ can be formulated as:
\begin{equation}
\label{eq:fy_gradient}
    \nabla_{\bm{x}} f_y(\bm{x}) = \sum_{i=y,\tilde{y}} \left( \frac{\partial f_y
                    }{
                        \partial f_i^L
                    }
                    \left(
                        \sum_{t=1}^T \sum_{\tilde{t}=1}^t \nabla_{\bm{x}[\tilde{t}]} s_i^L[t]
                    \right) \right).
\end{equation}
It is worth noting that the optimization problem involving the $\ell_0$ norm is known to be NP-hard~\citep{1995_SIAM_natarajan_L0Linear}.
To circumvent this challenge, we employ the $\ell_1$ norm as a substitute for the $\ell_0$ norm because the $\ell_1$ norm serves as a convex approximation to the $\ell_0$ norm~\citep{2013_Ramirez_L1ApproxL0}.
However, the computational burden associated with calculating the back-propagation of $\Vert \nabla_{\bm{x}} f_y(\bm{x}) \Vert_1$ is significant. Meanwhile, we find that SNNs are not trainable with such a double backpropagation approach.
Therefore, we adopt a finite difference approximation for this term~\citep{2021_Chris_GradientApproximation}.
In our case, we approximate the gradient regularization term using the following finite differences:
\begin{proposition}
\label{prop:L1approx}
    Let $\bm{d}$ denote the signed input gradient direction: $\bm{d}=sign(\nabla_{\bm{x}} f_y(\bm{x}))$, and $h$ be the finite difference step size. Then, the $\ell_1$ gradient norm can be approximated as:
    \begin{equation}
        \Vert \nabla_{\bm{x}} f_y(\bm{x}) \Vert_1 \approx \left| \frac{
        f_y(\bm{x}+h\cdot \bm{d}) - f_y(\bm{x})
        }{
        h
        } \right|.
    \end{equation}
\end{proposition}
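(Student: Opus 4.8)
\textbf{Proof proposal for Proposition~\ref{prop:L1approx}.}

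The plan is to recognize this as a standard directional-derivative approximation identity, where the key observation is that the dot product of a gradient with the \emph{sign} of that gradient recovers precisely its $\ell_1$ norm. First I would recall the definition of the directional derivative: for a differentiable scalar function $g = f_y$ and any direction $\bm{d}$, we have $g(\bm{x} + h \bm{d}) - g(\bm{x}) = h \, \nabla_{\bm{x}} g(\bm{x}) \cdot \bm{d} + o(h)$ as $h \to 0$. Dividing by $h$ gives the forward finite difference as an $O(h)$ approximation to $\nabla_{\bm{x}} g(\bm{x}) \cdot \bm{d}$.

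Next I would substitute the specific choice $\bm{d} = \mathrm{sign}(\nabla_{\bm{x}} f_y(\bm{x}))$ and compute the inner product component-wise. For each coordinate $k$, the contribution is $(\partial_k f_y)(\bm{x}) \cdot \mathrm{sign}((\partial_k f_y)(\bm{x})) = |(\partial_k f_y)(\bm{x})|$, so summing over coordinates yields $\nabla_{\bm{x}} f_y(\bm{x}) \cdot \bm{d} = \sum_k |(\partial_k f_y)(\bm{x})| = \Vert \nabla_{\bm{x}} f_y(\bm{x}) \Vert_1$. Combining this with the finite-difference step gives
\begin{equation}
\frac{f_y(\bm{x} + h \bm{d}) - f_y(\bm{x})}{h} = \Vert \nabla_{\bm{x}} f_y(\bm{x}) \Vert_1 + O(h),
\end{equation}
and since the $\ell_1$ norm is nonnegative, for small $h$ the left-hand side is itself nonnegative, so taking absolute values on both sides changes nothing to leading order; this justifies the approximation with the absolute value as stated. (One could equivalently note that wrapping the right side in $|\cdot|$ is harmless because $\Vert \cdot \Vert_1 \geq 0$, and wrapping the left side makes the formula robust to the sign error incurred at order $O(h)$.)

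I expect no serious obstacle here — this is essentially a one-line calculation once the sign trick is noted. The only point requiring mild care is the handling of coordinates where $(\partial_k f_y)(\bm{x}) = 0$: there $\mathrm{sign}(0) = 0$ contributes nothing to the inner product, which is consistent since $|0| = 0$, so the identity $\nabla_{\bm{x}} f_y \cdot \mathrm{sign}(\nabla_{\bm{x}} f_y) = \Vert \nabla_{\bm{x}} f_y \Vert_1$ holds with the usual convention. A secondary remark worth including is that this is exactly the construction underlying the FGSM perturbation direction in Equation~(\ref{eq:FGSM}), which ties the approximation back to the adversarial setting and explains why it is computationally cheap: it requires only one extra forward pass at $\bm{x} + h\bm{d}$ rather than a double backpropagation through $\Vert \nabla_{\bm{x}} f_y(\bm{x}) \Vert_1$.
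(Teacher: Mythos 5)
Your argument is correct and follows essentially the same route as the paper's own proof: a first-order Taylor expansion of $f_y(\bm{x}+h\cdot\bm{d})$ at $\bm{x}$ combined with the identity $\nabla_{\bm{x}} f_y(\bm{x})^T\,\mathrm{sign}(\nabla_{\bm{x}} f_y(\bm{x})) = \Vert \nabla_{\bm{x}} f_y(\bm{x}) \Vert_1$. Your additional remarks on the $\mathrm{sign}(0)$ convention and the harmlessness of the absolute value are fine but not needed beyond what the paper states.
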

The proof is provided in the Appendix~\ref{append:proof_prop}.
Finally, the training loss (Equation~(\ref{eq:08})) is rewritten as:
\begin{equation}
\label{eq:loss_sr}
    \mathcal{L}(\bm{x},y) = \text{ CE}(f^L(\bm{x}), y) + \lambda \left|
            \frac{f_y(\bm{x}+h\cdot \bm{d}) - f_y(\bm{x})
            }{
            h
            } \right|.
\end{equation}

The overall training algorithm is presented as Algorithm~\ref{alg:training}.

\begin{algorithm}[t]
	\caption{Training Algorithm}
    \textbf{Input}: Spiking neural network $f(\bm{x}, w)$ with parameter $w$\; Learning rate $\eta$; Step size $h$ of finite differences; Balance weight $\lambda$\\
    \textbf{Output}: Regularize trained parameter $w$
    
	\begin{algorithmic}[1]
    \label{alg:training}
	    \FOR{epoch=0 \textbf{to} n}
	        \STATE Sample minibatch $\{(\bm{x}^i,y^i)\}_{i=1,...,m}$ from Dataset %
            \FOR{$i=0$ \textbf{to} $m$}
                \STATE $f^L = f(\bm{x}^i, w)$
                \STATE $\tilde{y}^i = \arg\max_{j\ne y} f^L_j$
                \STATE $f_{y^i} = e^{f_{y^i}^L} / (e^{f_{y^i}^L} + e^{f_{\tilde{y}^i}^L})$
                \STATE $\bm{d}^i = \text{sign}( \nabla_x f_{y^i} ) \leftarrow$ the difference direction
                \STATE $\hat{\bm{x}}^i = \bm{x}^i+h \bm{d}^i$
                \STATE $\mathcal{L}(\bm{x}^i, y^i,w)$=$\text{CE}(f^L, y^i)$ 
                    + $\frac{\lambda}{h}|f_{y^i}(\hat{\bm{x}}^i) - f_{y^i}(\bm{x}^i)|$
                \STATE $w \leftarrow w - \eta \nabla_w \mathcal{L}(\bm{x}^i,y^i,w)$
            \ENDFOR
	    \ENDFOR
	\end{algorithmic}
\end{algorithm}

\begin{table*}[!t]
\caption{Comparison with the SOTA methods on classification accuracy (\%) under attacks.}
\centering
\resizebox{1\linewidth}{!}{
\begin{tabular}{cccccccccccc}
\toprule
\multirow{2}{*}{Dataset}
& \multirow{2}{*}{Arch.}
& \multirow{2}{*}{Defense}
& \multirow{2}{*}{Clean}
& \multicolumn{4}{c}{White Box Attack} 
& \multicolumn{4}{c}{Black Box Attack} \\
\cmidrule(lr){5-8} \cmidrule(lr){9-12}
&&&& PGD10& PGD30& PGD50& APGD10 & PGD10& PGD30& PGD50 & APGD10
\\
\midrule
 \multirow{3}{*}{CIFAR-10} & \multirow{3}{*}{VGG-11} &
 RAT & 90.44 & 11.53  & 7.08  & 6.41  & 3.26 & 43.29 & 40.17 & 40.16 & 47.50\\
 &&AT  & 89.97 & 18.18  & 14.79 & 14.63 & 10.36 & 44.02 & 43.38 & 43.40 & 52.90 \\
 &&$\text{SR}^*$ & 85.91 & 30.54  & 28.06 & 27.66 & 21.91 & 51.21 & 50.85 & 51.06 & 59.87\\ \midrule
 \multirow{3}{*}{CIFAR-10} & \multirow{3}{*}{WRN-16} &
 RAT & 92.70 & 10.52  & 5.14  & 4.33  & 2.19 & 38.35 & 31.04 & 30.40 & 36.42 \\
 &&AT  & 90.97 & 17.88  & 14.89 & 14.62 & 9.13 & 43.99 & 42.09 & 41.55 & 52.10 \\
 &&$\text{SR}^*$  & 85.63 & 39.18  & 37.04 & 36.74 & 29.03 & 50.84 & 50.23 & 49.97 & 57.93   \\ \midrule
 \multirow{3}{*}{CIFAR-100} & \multirow{3}{*}{WRN-16} &
 RAT & 69.10 & 5.72   & 3.58  & 3.26  & 2.08 & 22.61 & 18.77 & 18.26 & 25.23  \\
 &&AT  & 67.37 & 10.07  & 8.12  & 7.86  & 4.88 & 25.17 & 23.76 & 23.50 & 35.96  \\
 &&$\text{SR}^*$   & 60.37 & 19.76 & 18.39 & 18.11 & 13.32 & 28.38 & 28.01 & 27.94 & 36.69   \\
\bottomrule
\end{tabular}
}
\label{tab:sota}
\end{table*}

\subsection{Differences with Related Works}

We compare the proposed Sparsity Regularization (SR) strategy with Gradient Regularization (GR) as proposed by~\cite{2021_Chris_GradientApproximation} and classic adversarial training~\cite{aleksander2018towards} in Figure~\ref{fig:algo}. While GR relies on the selection of the multi-class calibrated loss function (e.g., cross-entropy or logistic loss), and adversarial training is associated with the attack method used in generating adversarial examples, the proposed SR strategy is both loss-independent and attack-independent.

\textbf{SR Strategy vs. GR Strategy.}. 
The main distinction between SR and GR strategies lies in that the regularization term in SR is exclusively tied to the \emph{model itself}, rather than being dependent on the multi-class calibrated loss used during the training phase, as shown in Figure~\ref{fig:algo} (b). To provide further clarification, we can express the training loss of the GR strategy as follows (defending against attacks):
\begin{equation}
    \ell(f^L(x),y) + \lambda \Vert \nabla_x \ell(f^L(x),y)\Vert_1^2
\end{equation}
where $\ell(f^L(x),y)$ is the multi-class calibrated loss function for the classification task (such as cross-entropy loss).
Note that the regularization term in GR varies depending on the specific choices of multi-class calibrated loss. In contrast, our proposed SR method introduces a regularization term that remains independent of the choice of loss function.
Moreover, our method can be combined with adversarial training to enhance the performance of adversarial training.

\textbf{SR Strategy vs. Adversarial Training.}
The key distinction between these two strategies lies in their approach. Adversarial training involves generating adversarial examples using specific attack methods, while the SR strategy is independent of adversarial examples. Figure~\ref{fig:algo} (c) illustrates that adversarial training aims to minimize the multi-class calibrated loss for adversarial images generated by attacks such as FGSM or PGD.
But the main idea of SR revolves around regularizing the sparsity of $\nabla_{\bm{x}} f_y(\bm{x})$. Although we employ the finite difference method (Equation~(\ref{eq:loss_sr})), to make the regularization of $\Vert \nabla_{\bm{x}} f_y(\bm{x}) \Vert_1$ computationally feasible, it is essential to recognize that the sample $\bm{x} + h\bm{d}$ fundamentally differs from an adversarial example in both numerical value and meaning.
On one hand, $\bm{x} + h\bm{d}$ is employed to calculate the difference quotient instead of directly calculating the multi-class calibrated loss. On the other hand, since $h$ serves as the step size for the approximation, it is advisable to use a small value to obtain a more accurate estimation of the $\ell_1$ norm. This contrasts with the requirement for a large $h$ when generating adversarial examples.

\section{Experiment}
\label{sec:experiment}
In this section, we evaluate the performance of the proposed SR strategy on image classification tasks using the CIFAR-10, CIRAR-100 and CIFAR10-DVS datasets. We adopt the experiment setting used in the previous work~\citep{ding2022snn}. Specifically, we use the VGG-11 architecture~\citep{simonyan2014very}, WideResNet with a depth of 16 and width of 4 (WRN-16)~\citep{zagoruyko2016wide}. The timestep for the SNNs is set to 8. Throughout the paper, we use the IF neuron with a hard-reset mechanism as the spiking neuron. 
Further details regarding the training settings can be found in the Appendix~\ref{append:train_settings}.

To generate adversarial examples, we employ different attacks, including FGSM~\citep{goodfellow2014explaining}, PGD~\citep{aleksander2018towards}, and AutoPGD~\citep{croce2020reliable}, with a fixed attack strength of $8/255$. For iterative attacks, the number of iterations is indicated in the attack name (e.g. PGD10). Since the choice of gradient approximation methods~\citep{bu2023rate} and surrogate functions can affect the attack success rate~\citep{xu2022securing}, we consider an ensemble attack for SNNs~\cite{Ozan2023adversarial}. We utilize a diverse set of surrogate gradients and consider both STBP-based~\cite{esser2016convolutional} and RGA-based~\cite{bu2023rate} attacks. For each test sample, we conduct multiple attacks using all possible combinations of gradient approximation methods and surrogate functions, and report the strongest attack. In other words, we consider an ensemble attack to be successful for a test sample as long as the model is fooled with any of the attacks from the ensemble.
Robustness is evaluated in two scenarios: the white-box scenario, where attackers have knowledge of the target model, and the black-box scenario, where the target model is unknown to attackers.
More detailed evaluation settings can be found in the Appendix~\ref{append:eval_settings}.
Moreover, for the ensemble attack to be meaningful and reveal any impact of gradient obfuscation, we run extensive experiment with varying widths of surrogate functions in Appendix~\ref{append:varying_width}.

\subsection{Compare with the State-of-the-art}
We validate the effectiveness of our method by comparing it with the current state-of-the-art approaches, including Regularized Adversarial Training (RAT)~\citep{ding2022snn} and Adversarial Training (AT)~\citep{kundu2021hire}. 
We use $\text{SR}^{*}$ to denote our sparsity regularization strategy with adversarial training.
For RAT-SNN, we replicate the model following the settings outlined in the paper~\citep{ding2022snn}. As for all SNNs trained with robustness training strategies, we adopt a PGD5 attack with $\epsilon=2/255$. 

Table~\ref{tab:sota} reports the classification accuracy of the compared methods under ensemble attacks. Columns 5-8 highlight the substantial enhancement in SNN robustness achieved through our strategy in the white box scenario.
Our proposed method consistently outperforms other State-Of-The-Art (SOTA) methods across all datasets and architectures.
For instance, when subjected to 10-steps PGD attacks, VGG-11 trained with our strategy elevates classification accuracy from 11.53\% (RAT) to an impressive 30.54\% on CIFAR-10.
Similarly, WRN-16 trained with SR* exhibits a remarkable 15\% boost in classification accuracy against PGD50 on CIFAR-100.
In comparison to the AT strategy, our method demonstrates a noteworthy enhancement in adversarial robustness, with a 10-20 percentage point improvement on both datasets under all attacks.

In contrast to white box attacks, all strategies exhibit better adversarial robustness against black box attacks (columns 9-12 in Table~\ref{tab:sota}).
When considering the CIFAR-10 dataset, models trained with any strategy achieve a classification accuracy of over 30\%  when subjected to PGD50.
However, models trained with the $\text{SR}^*$ strategy consistently outperform other strategies in all scenarios.
There exists a gap of 10\%-20\% in performance between models trained with RAT/AT and those trained with $\text{SR}^*$ on CIFAR-10, and a 5\%-10\% gap on CIFAR-100.
These results demonstrate the superiority of our approach over SOTA methods.

\subsection{Experiments on Dynamic Vision Sensor Data}

Since SNNs are suitable for application on neuromorphic data, we evaluate the effectiveness of the gradient sparsity regularization on CIFAR10-DVS dataset. Here, we use the preprocessed neuromorphic data and each data point contains ten frame-based data. The data batches are then fed into a 10-timestep spiking neural network for training and inference and we directly generate the adversarial noise on the frame-based data. Similar to previous experiments, we choose VGG-11 architecture and compare models trained from defense approaches including vanilla model, adversarial trained model and SR* strategy trained model. Here the regularization coefficient parameter is set to $\lambda=0.002$. 

Table \ref{tab:dvs} demonstrate the performance comparsion under adversarial attack with $\epsilon=0.031$. As can be seen from the table, the vanilla SNN can be easily fooled by the PGD50 attack and the robust performance is only 4.4\% under white-box PGD50 attack. However, with the application of adversarial training and gradient sparsity regularization, the robust performance shows a significant improvement. The adversarially trained model's robustness increases to 51.6\% and further rises to 61.2\% under PGD50 white-box attack. The SR* strategy model also exhibits a substantial improvement in adversarial robustness under black-box attacks, elevating the classification accuracy from 20.7\% to 68.9\% when subjected to the PGD50. These successful defenses of the sparse gradient method on the Dynamic Vision Sensor dataset proves the effectiveness and possibility of application of our method on neuromorphic datasets.

\begin{table}
\caption{Experiments on CIFAR10-DVS.}
\resizebox{1\linewidth}{!}
{
\begin{tabular}{ccccccccccc}
\toprule
\multirow{2}{*}{Defense}
& \multirow{2}{*}{Clean}
& \multicolumn{2}{c}{White Box Attack} 
& \multicolumn{2}{c}{Black Box Attack} \\
\cmidrule(lr){3-4} \cmidrule(lr){5-6}
& & FGSM & PGD50 & FGSM & PGD50 \\
\midrule
 Vanilla & 78.80 & 22.20 &  4.40 & 34.00 & 20.70 \\
 AT      & 76.80 & 60.00 & 51.60 & 71.50 & 65.50 \\
 SR      & 77.40 & 37.30 & 27.90 & 44.60 & 37.70 \\
 SR*     & 75.60 & 64.60 & 61.20 & 72.60 & 68.90 \\
\bottomrule
\end{tabular}
}
\label{tab:dvs}
\end{table}

\begin{table}[t]
\caption{Ablation study of the sparsity regularization.}
\label{tab:ablation}
\setlength\tabcolsep{2pt}
\centering
\resizebox{\linewidth}{!}{
\begin{threeparttable}
\begin{tabular}{llcccccc}
\toprule
SR & AT   & Clean & FGSM & RFGSM & PGD30 & PGD50 & APGD10\\ \midrule[1pt]
\multicolumn{8}{c}{CIFAR-10~~~~WRN-16} \\ \midrule
\usym{2715} & \usym{2715} & 93.89 & 5.23 & 3.43 & 0.00 & 0.00 & 0.00 \\ 
\usym{2715} & \Checkmark  & 90.97 & 33.49 & 58.19 & 14.89 & 14.62 & 9.13 \\ 
\Checkmark  & \usym{2715} & 86.57 & 34.79 & 55.96 & 12.27 & 11.70 & 8.25\\ 
\Checkmark  & \Checkmark  & 85.63 & 48.47 & 64.65 & 37.04 & 36.74   & 29.03 \\  \midrule[1pt]
\multicolumn{8}{c}{CIFAR-100~~~~WRN-16} \\ \midrule
\usym{2715} & \usym{2715} & 74.59 & 3.51 & 1.37 & 0.00 & 0.00 & 0.00 \\ 
\usym{2715} & \Checkmark  & 67.37 & 19.07 & 33.19 & 8.12 & 7.68  & 4.88 \\ 
\Checkmark  & \usym{2715} & 67.67 & 11.15 & 18.18 & 0.87 & 0.84 & 0.47  \\ 
\Checkmark  & \Checkmark  & 60.37 & 25.76 & 36.93 & 18.39 & 18.11 & 13.32 \\ 
\bottomrule
\end{tabular}
\end{threeparttable}
}
\end{table}

\subsection{Ablation Study of the Sparsity Regularization}
In the ablation study, we compare the robustness performance of SNNs using different training strategies: vanilla SNN, SR-SNN, AT-SNN, and SR*-SNN. The results on CIFAR-10 and CIFAR-100 are presented in Table~\ref{tab:ablation}, and the key findings are summarized as follows.

Firstly, it is crucial to note that vanilla SNNs exhibit poor adversarial robustness, with their classification accuracy dropping to a mere 5\% when subjected to the FGSM attack.
However, the SR strategy significantly enhances this performance, achieving a classification accuracy of 34.79\% on the CIFAR-10 dataset and 11.15\% on the CIFAR-100 dataset.
Furthermore, combining the SR strategy with the AT strategy further boosts the robustness of SNNs, particularly against strong attacks like PGD50 and APGD10,  resulting in a notable 10\%-30\% improvement.

Additionally, it is observed that the classification accuracy of robust SNNs on clean images may typically be slightly lower than that of baseline models. This phenomenon is consistent across all robustness training strategies. For example, WRN-16 models trained using any strategy exhibit a classification accuracy of less than 70\% on clean images in CIFAR-100.
Striking a balance between adversarial robustness and classification accuracy on clean images remains an open challenge in the field, warranting further exploration.

\subsection{Search for the Optimal Coefficient Parameter}

We conduct an extensive exploration to determine the optimal coefficient parameter $\lambda$, trying to strike a balance between robustness on adversarial images and classification accuracy on clean images.
The investigation specifically targets the CIFAR-10 dataset and the SNN model employs the WRN-16 architecture.

\begin{figure}[!t]  
    \centering
    \subfloat[Accuracy\label{fig:lambda_choose1}]{
    \includegraphics[width=0.48\linewidth]{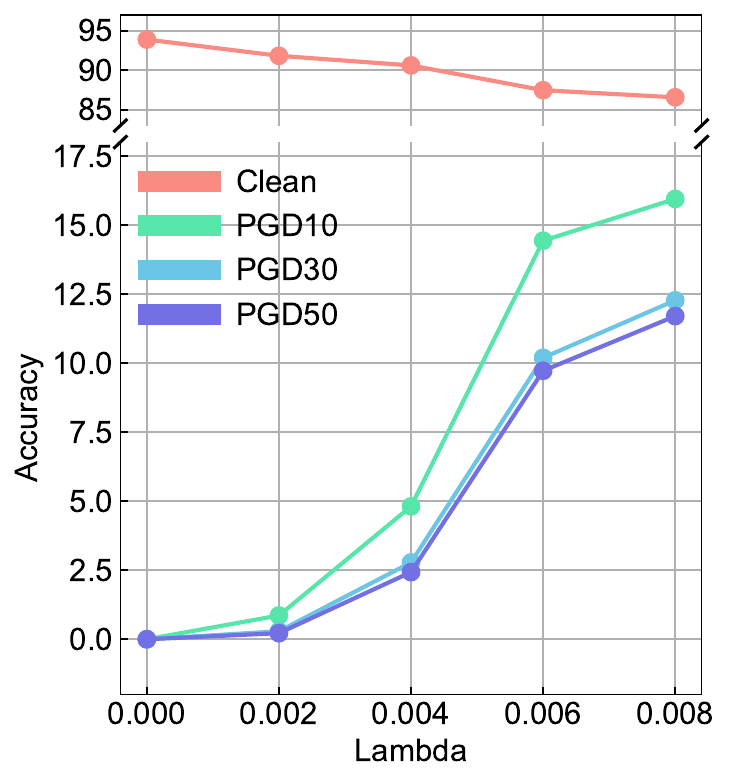}} 
    \subfloat[Gradient Norm\label{fig:lambda_choose2}]{
    \includegraphics[width=0.48\linewidth]{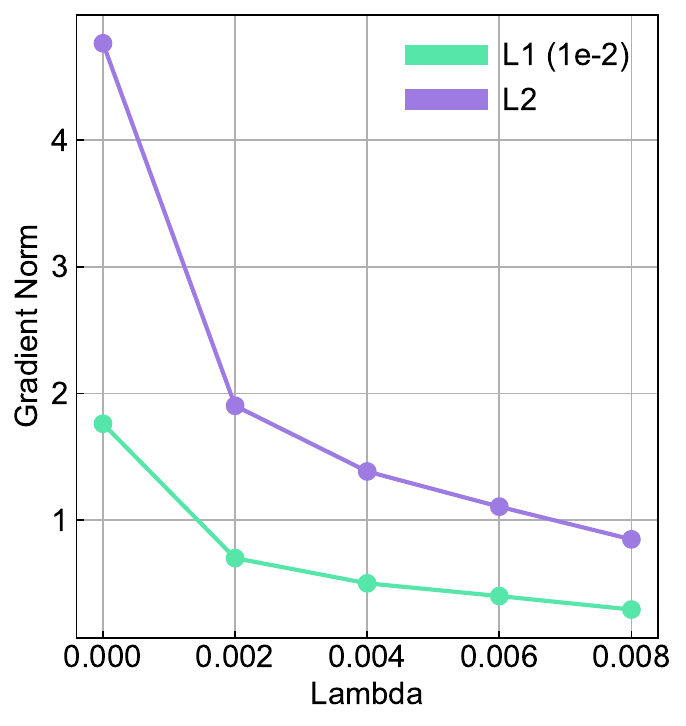}}
    \caption{The influence of the coefficient parameter $\lambda$ on classification accuracy and gradient sparsity. (a): Fluctuations in clean accuracy and adversarial accuracy under PGD attacks across different values of $\lambda$. (b): The $\ell_1$ and $\ell_2$ norms of the gradient with varying $\lambda$.}
    \label{fig:lambda_choose}
\end{figure} 

As described in Figure~\ref{fig:lambda_choose}~\subref{fig:lambda_choose1}, we test the impact of $\lambda$ varying within the range of 0.000 to 0.008. Notably,  increasing the value of $\lambda$ led to a decrease in classification accuracy on clean images but a significant improvement in adversarial robustness. To be specific, when using a coefficient parameter of $\lambda=0.008$, the classification accuracy under PGD10 attack increases from zero to 16\%, while maintaining over 85\% accuracy on clean images.

Figure~\ref{fig:lambda_choose}~\subref{fig:lambda_choose2} provides a visual representation of the effect of $\lambda$ on gradient sparsity after training. We computed the average $\ell_1$ and $\ell_2$ norm of $\nabla_{\bm{x}} f_y$ over the test dataset using models trained with different $\lambda$. Values of both the $\ell_1$ norm and $\ell_2$ decrease significantly as the coefficient parameter increases, indicating the correctness of the approximation method introduced in Proposition~\ref{prop:L1approx} and the effectiveness of the gradient sparsity regularization.

Based on these findings, we select $\lambda=0.008$ to train the SR-WRN-16 on the CIFAR-10 dataset to strike a balance between clean accuracy and adversarial robustness.
It is worth noting that the optimal choice of $\lambda$ may vary for different datasets. For additional insights into the relationship between $\lambda$, clean accuracy, and adversarial robustness on the CIFAR-100 dataset, please refer to the line chart presented in Appendix~\ref{append:lambda_cifar100}.

\subsection{Visualization of Gradient Sparsity}

\begin{figure}[!t]  
    \centering
    \includegraphics[width=0.9\linewidth]{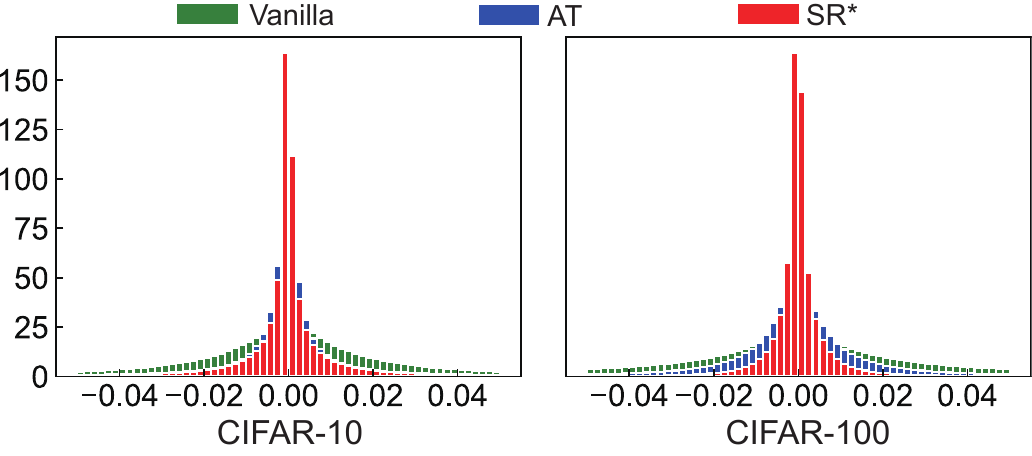}
    \caption{The normalized distribution of $\nabla_{\bm{x}} f_y(\bm{x})$. }
    \label{fig:gradient_sparsity}
\end{figure}

To validate the effectiveness of the proposed approximation method (Proposition~\ref{prop:L1approx}), we compute the gradient
$\nabla_{\bm{x}} f_y(\bm{x})$ at $\bm{x}$ in three cases: $f$ is a vanilla SNN, $f$ is an SNN with adversarial training (AT), and $f$ is an SNN trained with the proposed gradient sparsity regularization and adversarial training (SR*).
Figure~\ref{fig:gradient_sparsity} illustrates the overall distribution of components in the gradient across all test samples in the CIFAR-10 and CIFAR-100 datasets, respectively.

The results clearly show that the distribution of gradient components' values  for SR*-SNNs is more concentrated around zero compared to that of vanilla SNNs and AT-SNNs.
This indicates that SR*-SNNs exhibit sparser gradients with respect to the input image, demonstrating the effectiveness of the finite difference method proposed in Proposition~\ref{prop:L1approx} in constraining gradient sparsity. Meanwhile, these findings suggest a correlation between the sparsity of gradients and the robustness of SNNs to some extent: sparser gradients contribute to the enhancement of SNN robustness. 

\begin{figure*}[t]
    \centering
    \includegraphics[width=0.9\linewidth]{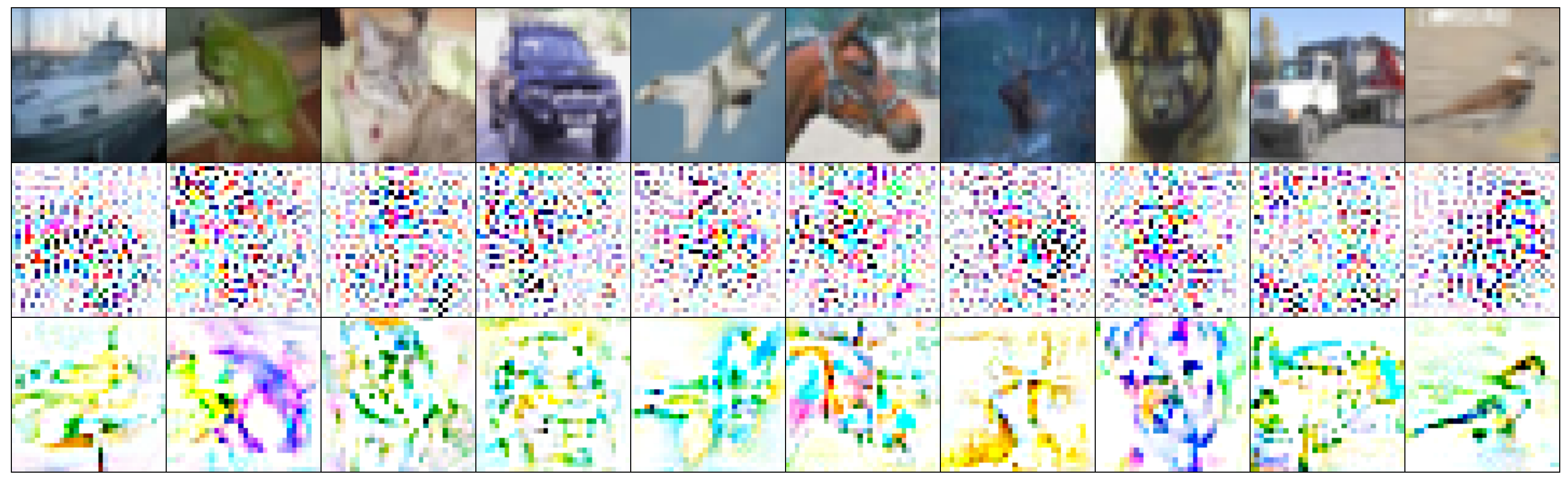}
    \caption{Heatmaps of $\nabla_{\bm{x}} f_y$ where $f$ is a villain SNN (top) or an SR-SNN (down).}
    \label{fig:gradient_visual}
\end{figure*}

To further substantiate the claim that SR-SNNs possess sparser gradients compared to vanilla SNNs, we present heatmaps of $\nabla_{\bm{x}} f_y$ for several examples from CIFAR-10~\cite{tsipras2019robustness}. In Figure~\ref{fig:gradient_visual}, the first row displays some original images selected from CIFAR-10, while the second and third rows show the corresponding heatmaps of $\nabla_{\bm{x}} f_y$ for vanilla SNN and SR-SNN, respectively.

Notably, the points on the heatmap of the vanilla SNN are densely distributed, while the points on the heatmap of the SR-SNN are more sparsely arranged. Moreover, the heatmap of gradients of the vanilla SNN appears cluttered to reflect any information about the image. However, the heatmap of the gradient of the SR-SNN shows some clear texture information of the image, which is beneficial to the interpretability of SNN. Therefore, we infer that the gradient sparsity regularization can not only improve the robustness of SNNs, but also provide some interpretability for SNNs.

\subsection{Impact of SR strategy to the Robustness Under Random Attacks}

We also investigate the impact of the SR strategy on the robustness of SNNs under random attacks. Experiments are conducted on CIFAR datasets using the WRN-16 architecture as the baseline. Random perturbations are uniformly drawn from a hyper-cube $\{ \delta_\text{rand}:\Vert \delta_\text{rand} \Vert_\infty \leqslant \epsilon\}$, with $\epsilon=0.1$. The classification accuracy of the vanilla WRN-16, SR-trained WRN-16, and SR*-trained (PGD5+SR) WRN-16 under random attacks is presented in Table~\ref{tab:random_robustness}. 

According to the table, both the SR and SR* strategies significantly enhance the robustness of SNNs against random attacks. For example, the single SR strategy improves the classification accuracy by nearly 20\% on the CIFAR-10 dataset. When combined with adversarial training, the SR* strategy still increases the random robustness, achieving 81\% classification accuracy. This indicates that the SR strategy does not compromise robustness under random attacks while narrowing the gap between adversarial and random vulnerabilities.

In addition to the experiments reported in the main manuscript, we also analyze the computational cost of the SR strategy in Appendix~\ref{append:cost}. We find that the training time for the SR strategy is less than that of PGD5 adversarial training, indicating that the SR strategy is efficient. More detailed explanations can be found in the appendix.

\section{Conclusion and Limitation}

\noindent \textbf{Conclusion.}
This paper introduces a new perspective on SNN robustness by incorporating the concept of gradient sparsity. We theoretically prove that the ratio of adversarial vulnerability to random vulnerability in SNNs is upper bounded by the sparsity of the true label probability with respect to the input image. 
Moreover, we propose the SR training strategy to train robust SNNs against adversarial attacks. Experimental results confirm the consistency between the theoretical analysis and practical tests.
This insight is expected to inspire ongoing research focused on enhancing SNN robustness by reducing gradient sparsity. Furthermore, it may also spark interest in investigating the robustness of event-driven SNNs, which naturally exhibit strong spike sparsity.

\begin{table}[t]
\centering
\caption{Classification accuracy (\%) of models trained with different methods under random attacks on CIFAR datasets.}
\label{tab:random_robustness}
\resizebox{0.7\linewidth}{!}{
\begin{tabular}{cccc}
\toprule
   & Vanilla & SR     & SR*    \\
\midrule
CIFAR-10 & 67.467  & 86.327 & 81.885 \\
CIFAR-100 & 26.270  & 49.900 & 48.678  \\
\bottomrule
\end{tabular}
}
\end{table}

\noindent \textbf{Limitation.}
The limitation of this work is that the improvement in adversarial robustness achieved by the SR strategy comes at the cost of a notable accuracy loss on clean images. In future work, we aim to strike a better balance between classification accuracy and adversarial robustness. For instance, we may employ the simulated annealing algorithm or structure learning~\cite{Wolfgang_2018_ICLR,Xu_2023_AAAI} in the SR strategy. Additionally, considering that biological vision has strong robustness~\cite{dapello2020simulating,yu2024robust}, proposing new SNN models by drawing inspiration from the mechanisms of biological vision is an important direction for the future.

\section*{Acknowledgements}
This work was supported by the National Natural Science Foundation of China (62176003, 62088102) and the Beijing Nova Program (20230484362). We also acknowledge Lei Wu for valuable discussions.

\section*{Impact Statement}

Our research makes contributions to the field of SNN by addressing the importance of gradient sparsity. This work provides both advanced theoretical understanding and practical solutions for robust SNN training, with a deliberate focus on ensuring no discernible negative societal consequences.

\bibliography{example_paper}
\bibliographystyle{icml2024}

\newpage
\appendix
\onecolumn

\section{Proof of Theorem~\ref{thr:binary_sparsity}}
\label{append:proof_theorem}
\renewcommand{\thetheorem}{1}
\begin{theorem}
	Suppose $f$ represents an SNN and $\epsilon$ is the strength of an attack. Given an input image $\bm{x}$ with corresponding label $y$, the ratio of $\rho_\text{adv}(f, \bm{x},\epsilon, \ell_\infty)$ and $ \rho_\text{rand}(f, \bm{x},\epsilon, \ell_\infty)$ is upper bounded by the sparsity of $\nabla_{\bm{x}} f_y$:
	\begin{equation}
		3 \leqslant 
                \frac{\rho_\text{adv}(f, \bm{x},\epsilon, \ell_\infty)}
                {\rho_\text{rand}(f, \bm{x},\epsilon, \ell_\infty)} 
                \leqslant 3\Vert \nabla_x f_y(\bm{x}) \Vert_0.
	\end{equation}
\end{theorem}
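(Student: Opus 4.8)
The plan is to linearize $f_y$ around $\bm{x}$ using the first-order Taylor expansion, which is justified because $\epsilon$ is assumed small enough and $f$ is differentiable via surrogate gradients. Writing $\bm{g} = \nabla_{\bm{x}} f_y(\bm{x})$, for any perturbation $\epsilon \cdot \delta$ with $\Vert \delta \Vert_\infty \leqslant 1$ we have $f_y(\bm{x} + \epsilon\cdot\delta) - f_y(\bm{x}) \approx \epsilon\, \langle \bm{g}, \delta \rangle$, so that both vulnerability quantities reduce to statements about the linear functional $\delta \mapsto \langle \bm{g},\delta\rangle$ over the unit $\ell_\infty$ ball. With this reduction, $\rho_\text{adv} = \epsilon^2 \sup_{\Vert\delta\Vert_\infty \leqslant 1} \langle \bm{g},\delta\rangle^2 = \epsilon^2 \Vert \bm{g}\Vert_1^2$ (the dual norm identity: the supremum is attained at $\delta = \sign(\bm{g})$), and $\rho_\text{rand} = \epsilon^2\, \mathbb{E}_{\delta \sim U\{\Vert\delta\Vert_\infty \leqslant 1\}} \langle \bm{g},\delta\rangle^2$. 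The $\epsilon^2$ factors cancel in the ratio, so everything comes down to bounding $\Vert\bm{g}\Vert_1^2 \big/ \mathbb{E}\langle\bm{g},\delta\rangle^2$.

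Next I would compute the denominator explicitly. Since $U\{\Vert\delta\Vert_\infty \leqslant 1\}$ is the uniform distribution on the cube $[-1,1]^n$, the coordinates $\delta_i$ are i.i.d.\ uniform on $[-1,1]$, so $\mathbb{E}[\delta_i] = 0$, $\mathbb{E}[\delta_i^2] = 1/3$, and cross terms vanish by independence. Hence
\begin{equation}
\mathbb{E}\,\langle \bm{g},\delta\rangle^2 = \sum_{i} g_i^2\,\mathbb{E}[\delta_i^2] = \tfrac{1}{3}\Vert \bm{g}\Vert_2^2 .
\end{equation}
Therefore the ratio equals $3\,\Vert\bm{g}\Vert_1^2 \big/ \Vert\bm{g}\Vert_2^2$ (up to the first-order approximation error). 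The two inequalities in the theorem are then exactly the classical norm-equivalence bounds: $\Vert\bm{g}\Vert_2 \leqslant \Vert\bm{g}\Vert_1$ gives $\Vert\bm{g}\Vert_1^2/\Vert\bm{g}\Vert_2^2 \geqslant 1$, i.e.\ the lower bound $3$; and Cauchy--Schwarz applied to $\bm{g}$ restricted to its support, $\Vert\bm{g}\Vert_1 = \sum_{i\,:\,g_i\neq 0}|g_i| \leqslant \sqrt{\Vert\bm{g}\Vert_0}\,\Vert\bm{g}\Vert_2$, gives $\Vert\bm{g}\Vert_1^2/\Vert\bm{g}\Vert_2^2 \leqslant \Vert\bm{g}\Vert_0$, i.e.\ the upper bound $3\Vert\nabla_{\bm{x}} f_y(\bm{x})\Vert_0$. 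The assumption $\rho_\text{rand} \neq 0$ ensures $\bm{g} \neq \vzero$ so the ratio is well defined.

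The main obstacle is making the linearization step rigorous — in particular, clarifying in what sense the theorem holds given that the Taylor remainder is nonzero for any fixed $\epsilon > 0$. I would address this by stating the result as an asymptotic identity as $\epsilon \to 0$ (the ratio converges to $3\Vert\bm{g}\Vert_1^2/\Vert\bm{g}\Vert_2^2$, which lies in $[3, 3\Vert\bm{g}\Vert_0]$), or by carrying the $O(\epsilon^2)$ Hessian/second-order remainder through both numerator and denominator and checking it is dominated once $\epsilon$ is small relative to the curvature of $f_y$ at $\bm{x}$ — using that $f_y$ is bounded (it is a softmax output in $[0,1]$) to control the remainder uniformly over the compact ball. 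A secondary subtlety is the differentiability of $f_y$: since the underlying SNN uses non-differentiable Heaviside firing, "differentiable" here means with respect to the surrogate-gradient computational graph, so I would simply adopt that convention and treat $\nabla_{\bm{x}} f_y$ as the surrogate gradient throughout, consistent with how the attack methods (FGSM, PGD) in the preliminaries already use it.
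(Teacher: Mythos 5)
Your proposal is correct and follows essentially the same route as the paper's proof: first-order Taylor expansion, computing $\rho_\text{rand} \approx \tfrac{1}{3}\epsilon^2\Vert\nabla f_y\Vert_2^2$ via $\mathbb{E}[\delta_i\delta_j]$, $\rho_\text{adv} \approx \epsilon^2\Vert\nabla f_y\Vert_1^2$ via the dual-norm/sign argument, and then the norm inequalities $\Vert u\Vert_2 \leqslant \Vert u\Vert_1 \leqslant \sqrt{\Vert u\Vert_0}\,\Vert u\Vert_2$ (the latter by Cauchy--Schwarz on the support). Your added discussion of how to interpret the approximation rigorously (asymptotics in $\epsilon$, surrogate-gradient differentiability) goes slightly beyond the paper, which leaves these points implicit.
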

\begin{proof}
	We assume $f$ to be differentiable, where the surrogate gradient is used.
	When $\epsilon$ is small, we can expand $f_y(\bm{x}+\epsilon\cdot\delta)$ at $f(\bm{x})$ by the first-order Taylor expansion
	\begin{equation}
		f_y(\bm{x}+\epsilon\cdot\delta) \approx f_y(\bm{x}) + \epsilon \nabla f_y(\bm{x})^T\delta.
	\end{equation}
	So, $f_y(\bm{x}+\epsilon\cdot\delta) - f_y(\bm{x}) \approx \epsilon \nabla f(\bm{x})^T\delta$.
	As $\delta \in \mathbb{R}^m$ and $\delta_i \sim Unif([-1,1])$, we have
	\begin{equation}
		\mathbb{E} (\delta_i\delta_j) = \left\{
		\begin{aligned}
			0 \quad i\ne j \\
			\frac{1}{3} \quad i=j.
		\end{aligned}
		\right.
	\end{equation}
	Therefore, $\rho_\text{rand}(f,\bm{x},\epsilon,\ell_\infty)$ can be approximated as follows:
	\begin{equation}
		\begin{split}
			\rho_\text{rand}(f,\bm{x},\epsilon,\ell_\infty)
			&= \mathbb{E}_{\delta\sim Unif(cube)} \left( f_y(\bm{x}+\epsilon\cdot\delta) - f_y(\bm{x}) \right)^2 
			\approx \epsilon^2 \nabla f_y (\bm{x})^T \mathbb{E}_{\delta} (\delta \delta^T) \nabla f_y(\bm{x}) 
			= \frac{1}{3} \epsilon^2 \Vert \nabla f(\bm{x}) \Vert_2^2.
		\end{split}
	\end{equation}
	
	On the other hand, 
	\begin{equation}
		\begin{split}
			\rho_\text{adv}(f,\bm{x},\epsilon,\ell_\infty) &= 
			\sup_{\delta\sim Unif(cube)} \left( f_y(x+\epsilon\cdot\delta) - f_y(\bm{x}) \right)^2
			\approx \epsilon^2 \left( \sup_{\delta\sim Unif(cube)} | \nabla f_y(\bm{x})^T \delta | \right)^2\\
			&= \epsilon^2 \left( \nabla f_y(\bm{x})^T \text{sign}(\nabla f_y(\bm{x})) \right)^2
			= \epsilon^2 \Vert \nabla f_y(\bm{x}) \Vert_1^2.
		\end{split}
	\end{equation}
	
	Consequently, the gap between $\rho_\text{adv}(f,\bm{x},\epsilon,\ell_\infty)$ and $\rho_\text{rand}(f,\bm{x},\epsilon,\ell_\infty)$ can be measured by 
	\begin{equation}
		\frac{\rho_\text{adv}(f,\bm{x},\epsilon,\ell_\infty)}{\rho_\text{rand}(f,\bm{x},\epsilon,\ell_\infty)} \approx 3 \frac{\Vert \nabla f_y(\bm{x})\Vert_1^2}{\Vert \nabla f_y(\bm{x})\Vert_2^2},
	\end{equation}
	which can be bounded by
	\begin{equation}
		3 \leq \frac{\rho_\text{adv}(f,\bm{x},\epsilon,\ell_\infty)}{\rho_\text{rand}(f,\bm{x},\epsilon,\ell_\infty)} \leqslant 3\Vert \nabla f_y(\bm{x}) \Vert_0.
	\end{equation}

	[proof of the inequality] For an $m$-dimensional vector $u\in\mathbb{R}^m$, we have $\Vert u \Vert_1 \geqslant \Vert u \Vert_2$.
	Because
	\begin{equation}
		\begin{split}
			\Vert u \Vert_1^2 &= (\sum_{i=1}^m |u_i|)^2 
			= \sum_{i=1}^m u_i^2 + \sum_i\sum_{j\ne i} |u_iu_j| \geqslant \sum_{i=1}^m u_i^2 = \Vert u \Vert_2^2.
		\end{split}
	\end{equation}
	Moreover, let $a\in\mathbb{R}^m$ be an $m$-dimensional vector with $a_i = \text{sign}(u_i)$
	Then 
	\begin{equation}
		\begin{split}
			\Vert u \Vert_1 &= \sum_{i=1}^m |u_i| = \sum_{i=1}^m u_i a_i
			\leqslant \left( \sum_{i=1}^m u_i^2 \right)^\frac{1}{2} \left( \sum_{i=1}^m a_i^2 \right)^\frac{1}{2}~\text{(Cauchy Schwartz inequality)}
			= \Vert u \Vert_2 \sqrt{\Vert u\Vert_0}
		\end{split}
	\end{equation}
	
\end{proof}

\section{Derivation of Equation~(\ref{eq:fy_gradient})}
\label{sec:der_eq10}

Let $\bm{x}$ denote the image, and $\{\bm{x}[1], \bm{x}[2], \dots, \bm{x}[T]\}$ represent the input image series. In our paper, we use $\bm{x}[t]=\bm{x}$ for all $t=1,\dots,T.$ The network is denoted by $f$ and the output of the network $f$ in the last layer is a vector $f^L(\bm{x})\in \mathbb{R}^{N \times 1}$, where $N$ represents the number of classes, i.e.
\begin{equation}
    f^L(\bm{x}) = \left( 
        \sum_{t=1}^T s_1^L[t], \dots, \sum_{t=1}^T s_N^L[t]
    \right).
\end{equation}
Based on Equation~(\ref{eq:f_y}), the component related to the true label $y$ is defined as
\begin{equation}
     f_y(\bm{x}) = \frac{
    e^{f_y^L}
    }{e^{f_y^L} + e^{f_{\tilde{y}^L}}}
     = \frac{
    e^{\sum_{t=1}^T s_y^L(t)}
    }{e^{\sum_{t=1}^T s_y^L(t)} + e^{\sum_{t=1}^T s_{\tilde{y}}^L(t)}},
\end{equation}
According to the chain rule, the gradient of $f_y(\bm{x})$ with respect to the input $\bm{x}$ is
\begin{equation}
     \nabla_{\bm{x}} f_y(\bm{x}) =  \frac{\partial f_y }{\partial f_y^L}
                 \cdot
                \nabla_{\bm{x}} \left( \sum_t s_y^L[t] \right) +
                \frac{\partial f_y }{\partial f_{\tilde{y}}^L}
                 \cdot \nabla_{\bm{x}} \left( \sum_t s_{\tilde{y}}^L[t] \right).
\end{equation}
In this formula, the gradient of $\nabla_{\bm{x}} \left( \sum_t s_i^L[t] \right)~(i=y,\tilde{y})$ in the last layer can be further expresses as
\begin{equation}
    \nabla_{\bm{x}} \left( \sum_t s_i^L[t] \right) =  \sum_t \nabla_{\bm{x}} s_i^L[t] 
    = \sum_t \sum_{\tilde{t}=1}^t \nabla_{\bm{x}[\tilde{t}]} s_i^L[t] .
\end{equation}
Finally, the gradient $\nabla_{\bm{x}} f_y(x,w)$ is written as
\begin{equation}
    \nabla_{\bm{x}} f_y(\bm{x}) = \sum_{i=y,\tilde{y}} \left( \frac{\partial f_y
                    }{
                        \partial f_i^L
                    }
                    \left(
                        \sum_{t=1}^T \sum_{\tilde{t}=1}^t \nabla_{\bm{x}[\tilde{t}]} s_i^L[t]
                    \right) \right).
\end{equation}

\section{Proof of Proposition~\ref{prop:L1approx}}
\label{append:proof_prop}
\renewcommand{\theproposition}{4.4}
\begin{proposition}
    Let $d$ denote the signed input gradient direction: $d=sign(\nabla_{\bm{x}} f_y(\bm{x}))$, and $h$ be the finite difference step size. Then, the $\ell_1$ gradient norm can be approximated as:
    \begin{equation}
        \Vert \nabla_{\bm{x}} f_y(\bm{x}) \Vert_1 \approx \left| \frac{
        f_y(\bm{x}+h\cdot d) - f_y(\bm{x})
        }{
        h
        } \right|
    \end{equation}
\end{proposition}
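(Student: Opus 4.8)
The plan is to establish the approximation via a first-order Taylor expansion of $f_y$ around $\bm{x}$ in the direction $h\bm{d}$, and then exploit the special structure of $\bm{d} = \mathrm{sign}(\nabla_{\bm{x}} f_y(\bm{x}))$. First I would write, for small $h$,
\begin{equation}
    f_y(\bm{x} + h\cdot\bm{d}) - f_y(\bm{x}) \approx h\,\nabla_{\bm{x}} f_y(\bm{x})^T \bm{d} + O(h^2).
\end{equation}
The key observation is that $\nabla_{\bm{x}} f_y(\bm{x})^T \bm{d} = \nabla_{\bm{x}} f_y(\bm{x})^T \mathrm{sign}(\nabla_{\bm{x}} f_y(\bm{x})) = \sum_i |\partial_i f_y(\bm{x})| = \Vert \nabla_{\bm{x}} f_y(\bm{x}) \Vert_1$, which is exactly the same identity used in the proof of Theorem~\ref{thr:binary_sparsity} when computing $\rho_\text{adv}$. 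Dividing through by $h$ gives
\begin{equation}
    \frac{f_y(\bm{x} + h\cdot\bm{d}) - f_y(\bm{x})}{h} \approx \Vert \nabla_{\bm{x}} f_y(\bm{x}) \Vert_1.
\end{equation}
Since the $\ell_1$ norm is nonnegative, wrapping the left-hand side in absolute values changes nothing to leading order, yielding the stated approximation.

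The steps in order: (1) invoke differentiability of $f$ (via surrogate gradients) to justify the first-order Taylor expansion along the ray $\bm{x} + h\bm{d}$; (2) apply the $\mathrm{sign}$ identity to identify the directional derivative $\nabla_{\bm{x}} f_y(\bm{x})^T\bm{d}$ with $\Vert\nabla_{\bm{x}} f_y(\bm{x})\Vert_1$; (3) divide by $h$ and take absolute values, noting the remainder is $O(h)$ and vanishes as $h\to 0$. I would also remark that this is precisely a forward finite-difference estimate of the one-dimensional derivative of $g(h) := f_y(\bm{x}+h\bm{d})$ at $h=0$, and that $g'(0) = \Vert\nabla_{\bm{x}} f_y(\bm{x})\Vert_1$, which frames the approximation in standard numerical-differentiation terms.

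The main subtlety — rather than a genuine obstacle — is the status of differentiability: $f$ is built from Heaviside firing functions and is only rendered differentiable through surrogate gradients, so the Taylor expansion should be understood with respect to the surrogate-gradient map, consistent with the convention already adopted in Theorem~\ref{thr:binary_sparsity}. A second minor point is that $\mathrm{sign}(0)$ is ambiguous; if some component of $\nabla_{\bm{x}} f_y(\bm{x})$ vanishes, the corresponding entry of $\bm{d}$ contributes $0$ to both $\nabla_{\bm{x}} f_y(\bm{x})^T\bm{d}$ and to $\Vert\nabla_{\bm{x}} f_y(\bm{x})\Vert_1$, so the identity is unaffected regardless of the convention chosen for $\mathrm{sign}(0)$. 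Everything else is a routine one-variable Taylor estimate, so I would keep the written proof short.
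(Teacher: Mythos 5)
Your proof is correct and follows essentially the same route as the paper's: a first-order Taylor expansion of $f_y(\bm{x}+h\cdot\bm{d})$ at $\bm{x}$ combined with the identity $\nabla_{\bm{x}} f_y(\bm{x})^T \mathrm{sign}(\nabla_{\bm{x}} f_y(\bm{x})) = \Vert \nabla_{\bm{x}} f_y(\bm{x}) \Vert_1$, then dividing by $h$. Your additional remarks on surrogate-gradient differentiability and the $\mathrm{sign}(0)$ convention are sensible clarifications but do not change the argument.
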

\begin{proof}
    The first order Taylor estimation of $f_y(\bm{x}+h\cdot d)$ at point $\bm{x}$ is
    \begin{equation}
        f_y(\bm{x}+h\cdot d) \approx f_y(\bm{x}) + h\cdot \nabla_{\bm{x}} f_y^T(\bm{x}) d
         = f_y(\bm{x}) + h\Vert \nabla_{\bm{x}} f_y(\bm{x}) \Vert_1.
    \end{equation}
    Therefore, $\Vert \nabla_{\bm{x}} f_y(\bm{x}) \Vert_1$ can be approximated by
    \begin{equation}
        \left| \frac{
        f_y(\bm{x}+h\cdot d) - f_y(\bm{x})
        }{
        h
        } \right|
    \end{equation}
\end{proof}

\section{Training Settings}
\label{append:train_settings}
We use the same training settings for all architectures and datasets. Our data augmentation techniques include RandomCrop, RandomHorizontalFlip, and zero-mean normalization. During training, we use the CrossEntropy loss function and Stochastic Gradient Descent optimizer with momentum. The learning rate $\eta$ is controlled by the cosine annealing strategy~\cite{loshchilov2016sgdr}. We utilize the Backpropagation Through Time (BPTT) algorithm with a triangle-shaped surrogate function, as introduced by \cite{esser2016convolutional}.
When incorporating sparsity gradient regularization, we set the step size of the finite difference method to 0.01. Also, we use a $\lambda=0.002$ on CIFAR-10/CIFAR10-DVS and $\lambda=0.001$ on CIFAR-100 for SR* method. For vanilla SR, we set $\lambda=0.008$ on CIFAR-10 and $\lambda=0.002$ on CIFAR-100/CIFAR10-DVS. 
The detailed training hyper-parameters are listed below.

\begin{table}[htbp]
\caption{Detailed training setting.}
\label{tab:ablation1}
\small
\centering
\begin{threeparttable}
\begin{tabular}{ccccccccccccc}
\toprule
Timestep & Initial LR & Batchsize & Weight Decay & Epochs & Momentum & $h$  & AT   & $\epsilon$ & PGD-step \\ 
\midrule
\multicolumn{10}{c}{CIFAR-10/100 Dataset}  \\ \midrule
8  & 0.1 & 64 & 5e-4 & 200 & 0.9 & 0.01 & PGD5 & 2/255 & 0.01  \\ \midrule
\multicolumn{10}{c}{CIFAR10-DVS Dataset}  \\ \midrule
10 & 0.1 & 24 & 5e-4 & 200 & 0.9 & 0.01 & FGSM & 2/255 & / \\ \bottomrule
\end{tabular}
\end{threeparttable}
\end{table}

\section{Evaluation Settings}
\label{append:eval_settings}
As mentioned in the main text, we consider an ensemble attack approach for SNNs. This involves utilizing a diverse set of surrogate gradients and considering both STBP-based and RGA-based attacks. We conduct the following attacks as the ensemble attack.
We consider an ensemble attack to be successful for a test sample as long as the model is fooled with any of the attacks from the ensemble.

\begin{itemize}
\setlength{\itemsep}{0pt}
\setlength{\parsep}{0pt} 
\setlength{\parskip}{0pt}
    \item \textbf{STBP-based attack with triangle-shaped surrogate function}, with the hyper-parameter $\gamma=1$~\citep{esser2016convolutional}. 

    \begin{align}
        \frac{\partial s^l_i(t)}{\partial u^l_i(t)} = \frac{1}{\gamma}\left| \lvert \gamma - \left \lvert u^l_i(t) - \theta \right| \rvert \right\rvert.
    \end{align}
    
    \item \textbf{STBP-based attack with sigmoid-shaped surrogate function}, with the hyper-parameter $\gamma$ being 4.
    \begin{align}
    \frac{\partial s^l_i(t)}{\partial u^l_i(t)} = \frac{1}{1+\exp{(-\gamma(u^l_i(t)-\theta))}}
    \end{align}
    
    \item \textbf{STBP-based attack with arc tangent surrogate function}, with the hyper-parameter $\gamma$ set to 2.
    \begin{align}
    \frac{\partial s^l_i(t)}{\partial u^l_i(t)} = \frac{\gamma}{2(1 + (\frac{\pi}{2}\gamma (u^l_i(t)-\theta))^2)}
    \end{align}

    \item \textbf{RGA-based attack with rate-based gradient estimation}, where the setting follows the paper~\citep{bu2023rate}.
\end{itemize}

\section{Coefficient Parameter Search on CIFAR-100}
\label{append:lambda_cifar100}

\begin{figure}[!b]  
    \centering
    \subfloat{
    \includegraphics[width=0.3\textwidth]{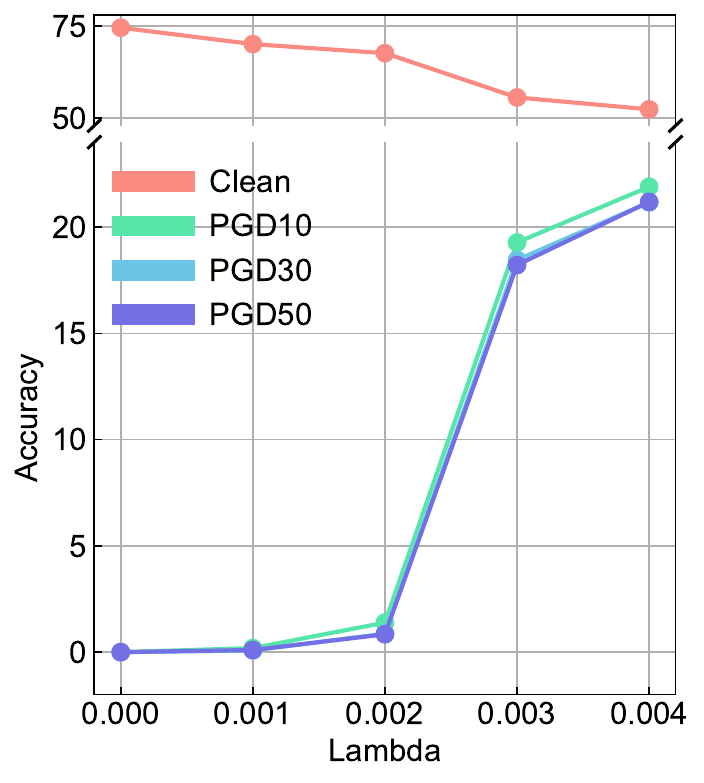}}
    \subfloat{
    \includegraphics[width=0.3\textwidth]{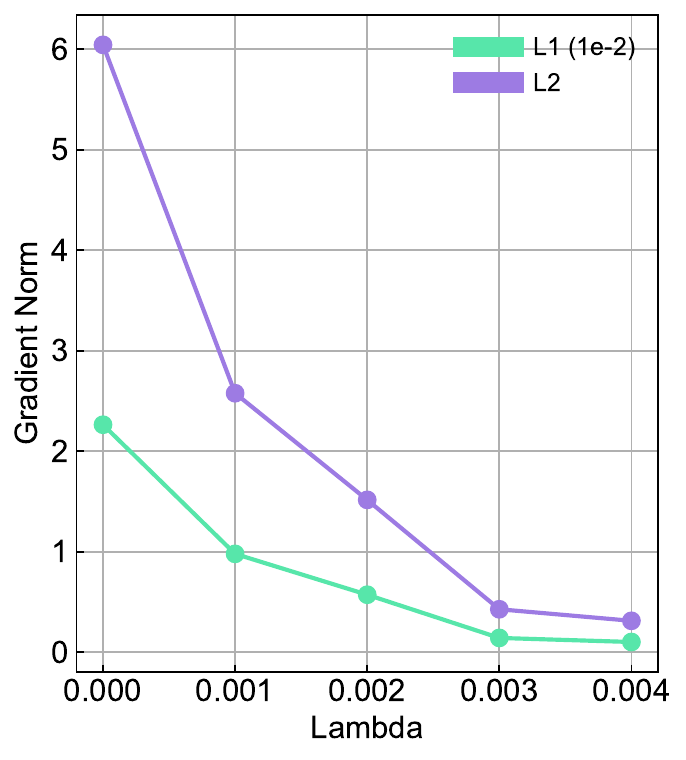}}
    \caption{The influence of the coefficient parameter $\lambda$ on classification accuracy and gradient sparsity. Left: Fluctuations in clean accuracy and adversarial accuracy under PGD attacks. Right: The $\ell_1$ and $\ell_2$ norms of the gradient with varying $\lambda$.}
    \label{fig:lambda_cifar100}
\end{figure}

Figure~\ref{fig:lambda_cifar100} (left) shows the relationship between the coefficient parameter $\lambda$, clean accuracy, and adversarial accuracy on the CIFAR-100 dataset.
To make a trade-off between classification accuracy on clean images and adversarial images, we choose $\lambda=0.002$ for SR-WRN-16 on the CIFAR-100 dataset.
Figure~\ref{fig:lambda_cifar100} (right) illustrates that the $\ell_1$ and $\ell_2$ norm of $\nabla_{\bm{x}} f_y$ decrease significantly with the increase of $\lambda$.

\section{Extensive Evaluations With Varying Widths of Surrogate Functions}
\label{append:varying_width}

For the ensemble attack to be meaningful and reveal any impact of gradient obfuscation, we run an evaluation with different widths $\gamma$ in the surrogate function. Specifically, we apply the attack with $\gamma\in [0.1,3.0]$ in fine-grained steps of 0.1, and report the results of the PGD10 attack on VGG-11 models with different training algorithms on the CIFAR-10 dataset in Table~\ref{tab:surrogate_width}.

\begin{table}[htbp]
\centering
\caption{The classification accuracy (\%) under the ensemble attack with different $\gamma$.}
\label{tab:surrogate_width}
\begin{tabular}{lccc}
\toprule
Attacks            & RAT  & AT & SR* \\
\midrule
PGD10 (w/o ensemble)          & 16.16        & 21.32       & 33.67        \\
PGD10 (w/ ensemble)           & 11.53        & 18.18       & 30.54        \\
PGD10 ($\gamma\in [0.1,3.0]$)  & 11.87        & 16.16       & 27.06       \\
\bottomrule
\end{tabular}
\end{table}

We compare three different attack combinations to evaluate the impact of different surrogate functions on the attack strength. We select RAT, PGD5-AT, and SR*(PGD5+SR) models as the target models. For PGD10(w/o ensemble), we only use the Triangle-shaped surrogate function, which is identical to the one used in training. For PGD10 (w/ ensemble), we use the attack combination as described in Section~\ref{sec:experiment}. For PGD10(
$\gamma\in [0.1, 3.0]$), we incorporate 30 different Triangle-shaped surrogate functions with widths ranging from $[0.1, 30]$.

We find that both ensemble attack methods significantly improve attack performance and mitigate the impact of gradient obfuscation. This indicates that both the shape and width of the surrogate function can influence the capability of the adversary. Although the PGD10($\gamma\in [0.1, 3.0]$) attack is slightly more effective than the ensemble attack used in Section~\ref{sec:experiment}, it uses a 30-fold fine-grained grid search over attack hyperparameters for each image, which is considerably more computationally expensive.

In conclusion, we demonstrate that changing the width of surrogate functions does not significantly influence the capability of the adversary any better than using different surrogate gradient shapes. Additionally, the proposed SR* strategy exhibits improved robustness under both scenarios.

\section{Comparison in Computational Cost}
\label{append:cost}

The computational costs for one epoch training of various algorithms, including vanilla, PGD5 AT, RAT, SR, and SR*(PGD5+SR), on the CIFAR-10 dataset using the VGG11 architecture are summarized in Table~\ref{tab:computational_cost}. From the table, we observe that a single SR incurs a computational cost 1.5 times that of RAT but consumes less than half the time needed by PGD5 AT. The computational cost of SR* is the highest among all training algorithms since it combines both SR and AT. However, models trained with SR* achieve the best robustness compared to models trained with other methods. 

\begin{table}[htbp]
    \centering
    \caption{The computational cost in one epoch of different training algorithms.}
    \begin{tabular}{cccccc}
    \toprule
        Vanilla & PGD5-AT & RAT & SR & SR*  \\ \midrule
        65s & 459s & 134s & 193s & 583s  \\ \bottomrule
    \end{tabular}
    \label{tab:computational_cost}
\end{table}

\end{document}